\documentclass[review,authoryear]{elsarticle}\usepackage[]{graphicx}\usepackage[]{color}
\makeatletter
\def\maxwidth{ %
  \ifdim\Gin@nat@width>\linewidth
    \linewidth
  \else
    \Gin@nat@width
  \fi
}
\makeatother

\definecolor{fgcolor}{rgb}{0.345, 0.345, 0.345}

\usepackage{framed}
\makeatletter
 {\par\unskip\endMakeFramed%
 \at@end@of@kframe}
\makeatother

\definecolor{shadecolor}{rgb}{.97, .97, .97}
\definecolor{messagecolor}{rgb}{0, 0, 0}
\definecolor{warningcolor}{rgb}{1, 0, 1}
\definecolor{errorcolor}{rgb}{1, 0, 0}
\newenvironment{knitrout}{}{} 

\usepackage{alltt}

\usepackage{graphicx, amsmath, amssymb, bm, mathtools, amsthm, natbib}
\usepackage{algorithm2e, url}

\bibpunct{(}{)}{;}{a}{,}{,}

\newcommand{\xbar}{\bar{\bm x}}
\newcommand{\tr}{\text{tr}}
\DeclareMathOperator*{\argmin}{arg\,min}

\newtheorem{cor}{Corollary}
\newtheorem{lemma}{Lemma}
\newtheorem{proposition}{Proposition}
\newtheorem{thm}{Theorem}

\oddsidemargin 0.0in
\evensidemargin 0.0in
\topmargin -0.75in
\leftmargin 0in
\headheight 0.0in
\headsep 0.5in
\footskip 0.5in
\footnotesep 0.0in
\textwidth 6.5in
\textheight 9.5in
\IfFileExists{upquote.sty}{\usepackage{upquote}}{}
\begin{document}

\begin{frontmatter}

\title{High-Dimensional Regularized Discriminant Analysis}

\author[jar]{John A. Ramey\cortext[cor1]{Corresponding author}}
\author[cks]{Caleb K. Stein}
\author[pdy]{Phil D. Young}
\author[dmy]{Dean M. Young}
\address[jar]{Novi Labs}
\address[cks]{Myeloma Institute, University of Arkansas for Medical Sciences}
\address[pdy]{Department of Management and Information Systems, Baylor University}
\address[dmy]{Department of Statistical Science, Baylor University}

\begin{abstract}

Regularized discriminant analysis (\emph{RDA}), proposed by
\cite{Friedman:1989tm}, is a widely popular classifier that lacks
interpretability and is impractical for high-dimensional data sets.  Here, we
present an interpretable and computationally efficient classifier called
high-dimensional \emph{RDA} (\emph{HDRDA}), designed for the small-sample,
high-dimensional setting. For \emph{HDRDA}, we show that each training
observation, regardless of class, contributes to the class covariance matrix,
resulting in an interpretable estimator that \emph{borrows} from the pooled
sample covariance matrix. Moreover, we show that \emph{HDRDA} is equivalent to
a classifier in a reduced-feature space with dimension approximately equal to
the training sample size. As a result, the matrix operations employed by
\emph{HDRDA} are computationally linear in the number of features, making the
classifier well-suited for high-dimensional classification in practice. We
demonstrate that \emph{HDRDA} is often superior to several sparse and
regularized classifiers in terms of classification accuracy with three
artificial and six real high-dimensional data sets. Also, timing comparisons
between our \emph{HDRDA} implementation in the {\tt sparsediscrim} R package
and the standard \emph{RDA} formulation in the {\tt klaR} R package demonstrate
that as the number of features increases, the computational runtime of
\emph{HDRDA} is drastically smaller than that of \emph{RDA}.

\end{abstract}

\begin{keyword}
Regularized discriminant analysis\sep High-dimensional classification\sep Covariance-matrix regularization\sep Singular value decomposition\sep Multivariate analysis\sep Dimension reduction
\MSC[2010] 62H30\sep 65F15\sep 65F20\sep 65F22
\end{keyword}

\end{frontmatter}

\section{Introduction}

In this paper, we consider the classification of small-sample, high-dimensional
data, where the number of features $p$ exceeds the training sample size $N$. In
this setting, well-established classifiers, such as linear discriminant analysis
(\emph{LDA}) and quadratic discriminant analysis (\emph{QDA}), become
incalculable because the class and pooled covariance matrix estimators are
singular \citep*{Murphy:2012uq, Bouveyron:2007gx, Mkhadri:1997gy}. To improve
the accuracy of the estimation of the class covariance matrices estimated in the
\emph{QDA} classifier and to ensure that the covariance matrix estimators are
nonsingular, \cite{Friedman:1989tm} proposed the regularized discriminant
analysis (\emph{RDA}) classifier by incorporating a weighted average of the
pooled sample covariance matrix and the class sample covariance matrix. To
further improve the accuracy of the estimation of the class covariance matrix
and to stabilize its inverse, \cite{Friedman:1989tm} also included a
regularization component by shrinking the covariance matrix estimator towards
the identity matrix, which yields a nonsingular estimator following the
well-known ridge-regression approach of \cite{Hoerl:1970cd}. Despite its
popularity, the ``borrowing'' operation employed in the \emph{RDA} classifier
lacks interpretability \citep{Bensmail:1996fw}. Furthermore, the \emph{RDA}
classifier is impractical for high-dimensional data sets because it computes the
inverse and determinant of the covariance matrices for each class. Both matrix
calculations are computationally expensive because the number of operations
grows at a polynomial rate in the number of features. Moreover, the model
selection of the \emph{RDA} classifier's two tuning parameters is
computationally burdensome because the matrix inverse and determinant of each
class covariance matrix are computed across multiple cross-validation folds for
each candidate tuning-parameter pair.

Here, we present the high-dimensional \emph{RDA} (\emph{HDRDA}) classifier,
which is intended for the case when $p > N$ . We reparameterize the \emph{RDA}
classifier similar to that of \cite*{Hastie:2008dt} and \cite{Halbe:2007ea} and
employ a biased covariance-matrix estimator that partially pools the individual
sample covariance matrices from the \emph{QDA} classifier with the pooled sample
covariance matrix from the \emph{LDA} classifier. We then shrink the resulting
covariance-matrix estimator towards a scaled identity matrix to ensure positive
definiteness. We show that the pooling parameter in the \emph{HDRDA} classifier
determines the contribution of each training observation to the estimation of
each class covariance matrix, enabling interpretability that has been previously
lacking with the \emph{RDA} classifier \citep{Bensmail:1996fw}. Our
parameterization differs from that of \cite*{Hastie:2008dt} and that of
\cite{Halbe:2007ea} in that our formulation allows the flexibility of various
covariance-matrix estimators proposed in the literature, including a variety of
ridge-like estimators, such as the one proposed by \cite{Srivastava:2007ww}.

Next, we establish that the matrix operations corresponding to the null space of
the pooled sample covariance matrix are redundant and can be discarded from the
\emph{HDRDA} decision rule without loss of classificatory information when we
apply reasoning similar to that of \cite{Ye:2006tq}. As a result, we achieve a
substantial reduction in dimension such that the matrix operations used in the
\emph{HDRDA} classifier are computationally linear in the number of features.
Furthermore, we demonstrate that the \emph{HDRDA} decision rule is invariant to
adjustments to the approximately $p - N$ zero eigenvalues, so that the decision
rule in the original feature space is equivalent to a decision rule in a lower
dimension, such that matrix inverses and determinants of relatively small
matrices can be rapidly computed. Finally, we show that several shrinkage
methods that are special cases of the \emph{HDRDA} classifier have no effect on
the approximately $p - N$ zero eigenvalues of the covariance-matrix estimators
when $p > N$. Such techniques include work from \cite{Srivastava:2007ww},
\cite{Rao:1971ul}, and other methods studied by \cite{Ramey:2013ji} and
\citet*{Xu:2009fl}.

We also provide an efficient algorithm along with pseudocode to estimate the
\emph{HDRDA} classifier's tuning parameters in a grid search via
cross-validation. Timing comparisons between our \emph{HDRDA} implementation in
the {\tt sparsediscrim} R package available on CRAN and the standard \emph{RDA}
formulation in the {\tt klaR} R package demonstrate that as the number of
features increases, the computational runtime of the \emph{HDRDA} classifier is
drastically smaller than that of \emph{RDA}. In fact, when $p = 5000$, we show
that the \emph{HDRDA} classifier is 502.786
times faster on average than the \emph{RDA} classifier. In this scenario, the
\emph{HDRDA} classifier's model selection requires 2.979
seconds on average, while that of the \emph{RDA} classifier requires
24.933 minutes on average.

Finally, we study the classification performance of the \emph{HDRDA} classifier
on six real high-dimensional data sets along with a simulation design that
generalizes the experiments initially conducted by \cite{Guo:2007te}. We
demonstrate that the \emph{HDRDA} classifier often attains superior
classification accuracy to several recent classifiers designed for small-sample,
high-dimensional data from \cite*{Tong:2012hw}, \cite{Witten:2011kc},
\cite*{Pang:2009ik}, and \cite{Guo:2007te}. We also include as a benchmark the
random forest from \cite{Breiman:2001fb} because \cite*{FernandezDelgado:2014ul}
have concluded that the random forest is often superior to other classifiers in
benchmark studies. We show that our proposed classifier is competitive and often
outperforms the random forest in terms of classification accuracy in the
small-sample, high-dimensional setting.

The remainder of this paper is organized as follows. In Section
\ref{sec:preliminaries} we introduce the classification problem and necessary
notation to describe our contributions. In Section \ref{sec:rda} we present the
\emph{HDRDA} classifier along with its interpretation. In Section
\ref{sec:hdrda-properties}, we provide properties of the \emph{HDRDA} classifier
and a computationally efficient model-selection procedure. In Section
\ref{sec:timing-comparisons}, we compare the model-selection timings of the
\emph{HDRDA} and \emph{RDA} classifiers. In Section \ref{sec:sims} we describe
our simulation studies of artificial and real data sets and examine the
experimental results. We conclude with a brief discussion in Section
\ref{sec:discussion}.

\section{Preliminaries}
\label{sec:preliminaries}

\subsection{Notation}

To facilitate our discussion of covariance-matrix regularization and
high-dimensional classification, we require the following notation. Let
$\mathbb{R}_{a \times b}$ denote the matrix space of all $a \times b$ matrices
over the real field $\mathbb{R}$. Denote by $\bm I_m$ the $m \times m$ identity
matrix, and let $\bm 0_{m \times p}$ be the $m \times p$ matrix of zeros, such
that $\bm 0_m$ is understood to denote $\bm 0_{m \times m}$. Define $\bm 1_m \in
\mathbb{R}_{m \times 1}$ as a vector of ones. Let $\bm A^{T}$, $\bm A^+$, and
$\mathcal{N}(\bm A)$ denote the transpose, the Moore-Penrose pseudoinverse, and
the null space of $\bm A \in \mathbb{R}_{m \times p}$, respectively. Denote by
$\mathbb{R}_{p \times p}^{>}$ the cone of real $p \times p$ positive-definite
matrices. Similarly, let $\mathbb{R}_{p \times p}^{\ge}$ denote the cone of real
$p \times p$ positive-semidefinite matrices. Let $V^{\perp}$ denote the
orthogonal complement of a vector space $V \subset \mathbb{R}_{p \times 1}$. For
$c \in \mathbb{R}$, let $c^+ = 1/c$ if $c \ne 0$ and $0$ otherwise.

\subsection{Discriminant Analysis}

In discriminant analysis we wish to assign an unlabeled vector $\bm x \in
\mathbb{R}_{p \times 1}$ to one of $K$ unique, known classes by constructing a
classifier from $N$ training observations. Let $\bm x_i = (x_{i1}, \ldots,
x_{ip}) \in \mathbb{R}_{p \times 1}$ be the $i$th observation $(i = 1, \ldots,
N)$ with true, unique membership $y_i \in \{\omega_1, \ldots, \omega_K\}$.
Denote by $n_k$ the number of training observations realized from class $k$,
such that $\sum_{k=1}^K n_k = N$. We assume that $(\bm x_i, y_i)$ is a
realization from a mixture distribution $p(\bm x) = \sum_{k=1}^K p(\bm x |
\omega_k) p(\omega_k)$, where $p(\bm x | \omega_k)$ is the probability density
function (PDF) of the $k$th class and $p(\omega_k)$ is the prior probability of
class membership of the $k$th class. We further assume $p(\omega_k) =
p(\omega_l)$, $1 \le k, l \le K$, $k \ne l$.

The \emph{QDA} classifier is the optimal Bayesian decision rule with respect to
a $0-1$ loss function when $p(\bm x | \omega_k)$ is the PDF of the multivariate
normal distribution with known mean vectors $\bm \mu_k \in \mathbb{R}_{p \times
  1}$ and known covariance matrices $\bm \Sigma_k \in \mathbb{R}_{p \times
  p}^{>}$, $k = 1, 2, \ldots, K$. Because $\bm \mu_k$ and $\bm \Sigma_k$ are
typically unknown, we assign an unlabeled observation $\bm x$ to class
$\omega_k$ with the sample \emph{QDA} classifier
\begin{align}
	D_{QDA}(\bm x) = \argmin_{k} (\bm x - \xbar_k)^{T}\widehat{\bm
    \Sigma}_k^{-1}(\bm x - \xbar_k) + \log |\widehat{\bm \Sigma}_k|, \label{eq:qda}
\end{align}
where $\xbar_k$ and $\widehat{\bm \Sigma}_k$ are the maximum-likelihood
estimators (MLEs) of $\bm \mu_k$ and $\bm \Sigma_k$, respectively. If we assume
further that $\bm \Sigma_k = \bm \Sigma$, $k = 1, \ldots, K$, then the pooled
sample covariance matrix $\widehat{\bm \Sigma}$ is substituted for $\widehat{
  \bm \Sigma}_k$ in \eqref{eq:qda}, where
\begin{align}
	\widehat{\bm \Sigma} = N^{-1} \sum_{k=1}^K n_k \widehat{\bm \Sigma}_k \label{eq:pooled-cov}
\end{align}
is the MLE for $\bm \Sigma$. Here, \eqref{eq:qda} reduces to the sample
\emph{LDA} classifier. We omit the log-determinant because it is constant across
the $K$ classes.

The smallest eigenvalues of $\widehat{\bm \Sigma}_k$ and the directions
associated with their eigenvectors can highly influence the classifier in
\eqref{eq:qda}. In fact, the eigenvalues of $\widehat{ \bm \Sigma}_k$ are
well-known to be biased if $p \ge n_k$ such that the smallest eigenvalues are
underestimated \citep{Seber:2004uh}. Moreover, if $p > n_k$, then
rank$(\widehat{\bm \Sigma}_k) \le n_k$, which implies that at least $p - n_k$
eigenvalues of $\widehat{\bm \Sigma}_k$ are zero. Furthermore, although more
feature information is available to discriminate among the $K$ classes, if $p >
n_k$, \eqref{eq:qda} is incalculable because $\widehat{\bm \Sigma}_k^{-1}$ does
not exist.

Several regularization methods, such as the methods considered by
\cite{Xu:2009fl}, \cite{Guo:2007te}, and \cite{Mkhadri:1995jp}, have been
proposed in the literature to adjust the eigenvalues of $\widehat{\bm \Sigma}_k$
so that \eqref{eq:qda} is calculable and provides reduced variability for
$\widehat{\bm \Sigma}_k^{-1}$. A common form of the covariance-matrix
regularization applies a shrinkage factor $\gamma > 0$, so that
\begin{align}
	\widehat{\bm \Sigma}_k(\gamma) = \widehat{\bm \Sigma}_k + \gamma \bm I_p, \label{eq:ridge-estimator}
\end{align}
similar to a method employed in ridge regression \citep{Hoerl:1970cd}. Equation
\eqref{eq:ridge-estimator} effectively \emph{shrinks} the sample covariance
matrix $\widehat{\bm \Sigma}_k$ toward $\bm I_p$, thereby increasing the
eigenvalues of $\widehat{\bm \Sigma}_k$ by $\gamma$. Specifically, the zero
eigenvalues are replaced with $\gamma$, so that \eqref{eq:ridge-estimator} is
positive definite. For additional covariance-matrix regularization methods, see
\cite{Ramey:2013ji}, \cite{Xu:2009fl}, and \cite{Ye:2009gd}.

\section{High-Dimensional Regularized Discriminant Analysis}
\label{sec:rda}

Here, we define the \emph{HDRDA} classifier by first formulating the
covariance-matrix estimator $\widehat{\bm \Sigma}_k(\lambda)$ and demonstrating
its clear interpretation as a linear combination of the crossproducts of the
training observations centered by their respective class sample means. We define
the convex combination
\begin{align}
  \widehat{\bm \Sigma}_k(\lambda) \coloneqq (1 - \lambda) \widehat{\bm \Sigma}_k + \lambda
  \widehat{\bm \Sigma}, \quad k = 1, \ldots, K,\label{eq:sig-lambda-alternative}
\end{align}
where $\lambda \in [0, 1]$ is the \emph{pooling} parameter. By rewriting
\eqref{eq:sig-lambda-alternative} in terms of the observations $\bm x_i$, $i =
1, \ldots, N$, each centered by its class sample mean, we attain a clear
interpretation of $\widehat{\bm \Sigma}_k(\lambda)$. That is,
\begin{align}
	\widehat{\bm \Sigma}_k(\lambda)
  &= \left( 1 - \lambda + \frac{\lambda n_k}{N} \right) \widehat{\bm \Sigma}_k +  \frac{\lambda}{N} \sum_{\substack{k' = 1\\k' \ne k}}^K n_{k'} \widehat{\bm \Sigma}_{k'} \nonumber \\
	&= \left( \frac{1 - \lambda}{n_k} + \frac{\lambda}{N} \right)\sum_{i=1}^N
  I(y_i = k) \bm x_i \bm x_i^{T} +  \frac{\lambda}{N} \sum_{i=1}^N I(y_i \ne k)
  \bm x_i \bm x_i^{T} \nonumber \\
	&= \sum_{i=1}^N c_{ik}(\lambda) \bm x_i \bm x_i^{T},\label{eq:sig-lambda-alternative2}
\end{align}
where $c_{ik}(\lambda) = \lambda N^{-1} + (1 - \lambda)n_k^{-1}I(y_i = k)$.
From \eqref{eq:sig-lambda-alternative2}, we see that $\lambda$ weights the
contribution of each of the $N$ observations in estimating $\bm \Sigma_k$ from
all $K$ classes rather than using only the $n_k$ observations from a single
class. As a result, we can interpret \eqref{eq:sig-lambda-alternative2} as a
covariance-matrix estimator that borrows from $\widehat{\bm \Sigma}$ in
\eqref{eq:pooled-cov} to estimate $ \bm \Sigma_k$.

In Figure \ref{fig:hdrda-contours} we plot the contours of five multivariate
normal populations for $\lambda = 0$ with unequal covariance matrices. As
$\lambda$ approaches 1, the contours become more similar, resulting in identical
contours for $\lambda = 1$. Below, we show that the pooling operation is
advantageous in increasing the rank of each $\widehat{\bm \Sigma}_k(\lambda)$
from rank$(\widehat{\bm \Sigma}_k)$ to rank$(\widehat{\bm \Sigma})$ for $0 <
\lambda \le 1$. Notice that if $\lambda = 0$, then the observations from the
remaining $K - 1$ classes do not contribute to the estimation of $\bm \Sigma_k$,
corresponding to $\widehat{\bm \Sigma}_k$. Furthermore, if $\lambda = 1$, the
weights $c_{ik}(\lambda)$ in \eqref{eq:sig-lambda-alternative2} reduce to $1/N$,
corresponding to $\widehat{\bm \Sigma}$. For brevity, when $\lambda = 1$, we
define $\bm X = [\sqrt{c_{1k}(1)} \bm x_1^{T}, \ldots, \sqrt{c_{Nk}(1)} \bm
  x_N^{T}]^{T}$ such that $\widehat{\bm \Sigma} = N^{-1} \bm X^{T} \bm
X$. Similarly, for $\lambda = 0$, we define $\bm X_k = [\sqrt{c_{1k}(0)} \bm
  x_1^{T}, \ldots, \sqrt{c_{Nk}(0)} \bm x_N^{T}]^{T}$ such that $\widehat{\bm
  \Sigma}_k = n_k^{-1} \bm X_k^{T} \bm X_k$.

\[ \left[\text{Insert Figure \ref{fig:hdrda-contours} approximately here }\right] \]

As we have discussed above, several eigenvalue adjustment methods have been
proposed that increase eigenvalues (approximately) equal to 0. To further
improve the estimation of $\bm \Sigma_k$ and to stabilize the estimator's
inverse, we define the eigenvalue adjustment of
\eqref{eq:sig-lambda-alternative} as
\begin{align}
	\tilde{\bm \Sigma}_k \coloneqq \alpha_k \widehat{\bm \Sigma}_k(\lambda) + \gamma \bm I_p,\label{eq:hdrda-cov}
\end{align}
where $\alpha_k \ge 0$ and $\gamma \ge 0$ is an eigenvalue-shrinkage
constant. Thus, the \emph{pooling} parameter $\lambda$ controls the amount of
estimation information borrowed from $\widehat{\bm \Sigma}$ to estimate $\bm
\Sigma_k$, and the \emph{shrinkage} parameter $\gamma$ determines the degree of
eigenvalue shrinkage. The choice of $\alpha_k$ allows for a flexible formulation
of covariance-matrix estimators. For instance, if $\alpha_k = 1$, $k = 1,
\ldots, K$, then \eqref{eq:hdrda-cov} resembles \eqref{eq:ridge-estimator}.
Similarly, if $\alpha_k = 1 - \gamma$, then \eqref{eq:hdrda-cov} has a form
comparable to the \emph{RDA} classifier from \cite{Friedman:1989tm}.
Substituting \eqref{eq:hdrda-cov} into \eqref{eq:qda}, we define the
\emph{HDRDA} classifier as
\begin{align}
	D_{HDRDA}(\bm x) = \argmin_{k}  (\bm x - \xbar_k)^{T}\tilde{\bm \Sigma}_k^{+}(\bm x - \xbar_k)  + \log |\tilde{\bm \Sigma}_k|. \label{eq:hdrda}
\end{align}
For $\gamma > 0$, $\tilde{\bm \Sigma}_k$ is nonsingular such that $\tilde{\bm
  \Sigma}_k^{-1}$ can be substituted for $\tilde{\bm \Sigma}_k^{+}$ in
\eqref{eq:hdrda}. If $\gamma = 0$, we explicitly set $|\tilde{\bm \Sigma}_k|$
equal to the product of the positive eigenvalues of $\tilde{\bm
  \Sigma}_k$. Following \cite{Friedman:1989tm}, we select $\lambda$ and $\gamma$
from a grid of candidate models via cross-validation \citep{Hastie:2008dt}. We provide an
implementation of \eqref{eq:hdrda} in the {\tt hdrda} function contained in the
{\tt sparsediscrim} R package, which is available on CRAN.

The choice of $\alpha_k$ in \eqref{eq:hdrda-cov} is one of convenience and
allows the flexibility of various covariance-matrix estimators proposed in the
literature. In practice, we generally are not interested in estimating
$\alpha_k$ because the estimation of $K$ additional tuning parameters via
cross-validation is counterproductive to our goal of computational efficiency.
For appropriate values of $\alpha_k$, the \emph{HDRDA} covariance-matrix
estimator includes or resembles a large family of estimators. Notice that if
$\alpha_k = 1$ and $\lambda = 1$, \eqref{eq:hdrda-cov} is equivalent to the
standard ridge-like covariance-matrix estimator in \eqref{eq:ridge-estimator}.
Other estimators proposed in the literature can be obtained when one selects
$\gamma$ accordingly. For instance, with $\gamma = \tr\{ \widehat{\bm \Sigma} \}
/ \min(N, p)$, we obtain the estimator from \cite{Srivastava:2007ww}.

When $\alpha_k = 1 - \gamma$, \eqref{eq:hdrda-cov} resembles the biased covariance-matrix estimator
\begin{align}
    \widehat{\bm \Sigma}_k(\lambda, \gamma) = (1 - \gamma) \widehat{\bm \Sigma}_k^{(RDA)}(\lambda) + \gamma \frac{\tr\left\{\widehat{\bm \Sigma}_k^{(RDA)}(\lambda)\right\}}{p} \bm I_p \label{eq:sig-rda}
\end{align}
employed in the \emph{RDA} classifier, where $\widehat{\bm
  \Sigma}_k^{(RDA)}(\lambda)$ is a pooled estimator of $\bm \Sigma_k$ and
$\gamma \in [0, 1]$ is a regularization parameter that controls the shrinkage of
\eqref{eq:sig-rda} towards $\bm I_p$ weighted by the average of the eigenvalues
of $\widehat{\bm \Sigma}_k^{(RDA)}(\lambda)$. Despite the similarity of
\eqref{eq:sig-rda} to the \emph{HDRDA} covariance-matrix estimator in
\eqref{eq:hdrda-cov}, the \emph{RDA} classifier is impractical for
high-dimensional data because the inverse and determinant of \eqref{eq:sig-rda}
must be calculated when substituted into \eqref{eq:qda}. Furthermore,
\eqref{eq:sig-rda} has no clear interpretation.

\section{Properties of the HDRDA Classifier}
\label{sec:hdrda-properties}

Next, we establish properties of the covariance-matrix estimator and the
decision rule employed in the \emph{HDRDA} classifier. By doing so, we
demonstrate that \eqref{eq:hdrda} lends itself to a more efficient
calculation. We decompose \eqref{eq:hdrda} into a sum of two components, where
the first summand consists of matrix operations applied to low-dimensional
matrices and the second summand corresponds to the null space of $\widehat{\bm
  \Sigma}$ in \eqref{eq:pooled-cov}. We show that the matrix operations
performed on the null space of $\widehat{\bm \Sigma}$ yield constant quadratic
forms across all classes and can be omitted. For $p \gg N$, the constant
component involves determinants and inverses of high-dimensional matrices, and
by ignoring these calculations, we achieve a substantial reduction in
computational costs. Furthermore, a byproduct is that adjustments to the
associated eigenvalues have no effect on \eqref{eq:hdrda}. Lastly, we utilize
the singular value decomposition to efficiently calculate the eigenvalue
decomposition of $\widehat{\bm \Sigma}$, further reducing the computational
costs of the \emph{HDRDA} classifier.

First, we require the following relationship regarding the null spaces of
$\widehat{\bm \Sigma}_k(\lambda)$, $\widehat{\bm \Sigma}$, and $\widehat{ \bm
  \Sigma}_k$.

\begin{lemma}\label{lemma:null-spaces}
Let $\widehat{\bm \Sigma}_k$ and $\widehat{\bm \Sigma}$ be the MLEs of $\bm
\Sigma_k$ and $\bm \Sigma$, respectively. Let $\widehat{\bm \Sigma}_k(\lambda)$
be defined as in \eqref{eq:sig-lambda-alternative}. Then,
$\mathcal{N}\{\widehat{\bm \Sigma}_k(\lambda)\} \subset \mathcal{N}(\widehat{
  \bm \Sigma}) \subset \mathcal{N}(\widehat{\bm \Sigma}_k)$, $k = 1, \ldots, K$.
\end{lemma}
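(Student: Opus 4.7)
The plan is to prove both inclusions by exploiting the fact that all three matrices are positive semidefinite and that $\widehat{\bm \Sigma}$ and $\widehat{\bm \Sigma}_k(\lambda)$ are written as nonnegative linear combinations of PSD pieces.

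The core fact I would establish first is the following elementary lemma: if $\bm A, \bm B \in \mathbb{R}_{p \times p}^{\ge}$ and $a, b > 0$, then $\mathcal{N}(a\bm A + b \bm B) = \mathcal{N}(\bm A) \cap \mathcal{N}(\bm B)$. The $\supset$ direction is immediate. For $\subset$, take $\bm v$ with $(a \bm A + b \bm B)\bm v = \bm 0$ and compute $\bm v^{T}(a\bm A + b\bm B)\bm v = a \bm v^{T}\bm A \bm v + b \bm v^{T}\bm B \bm v = 0$; since both summands are nonnegative and $a, b > 0$, each quadratic form vanishes, and for a PSD matrix $\bm M$ the condition $\bm v^{T}\bm M \bm v = 0$ forces $\bm M \bm v = \bm 0$ (use the symmetric square root $\bm M^{1/2}$). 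Iterating pairwise gives the analogous identity for any finite positively weighted sum.

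Applying this to $\widehat{\bm \Sigma} = N^{-1}\sum_{k=1}^{K} n_k \widehat{\bm \Sigma}_k$, which is a positively weighted sum of the PSD matrices $\widehat{\bm \Sigma}_k$ (with $n_k > 0$), yields $\mathcal{N}(\widehat{\bm \Sigma}) = \bigcap_{k=1}^{K} \mathcal{N}(\widehat{\bm \Sigma}_k)$, and in particular $\mathcal{N}(\widehat{\bm \Sigma}) \subset \mathcal{N}(\widehat{\bm \Sigma}_k)$ for each $k$, which is the second inclusion.

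For the first inclusion, I would split on $\lambda$. If $\lambda = 0$, then $\widehat{\bm \Sigma}_k(\lambda) = \widehat{\bm \Sigma}_k$, so the claim $\mathcal{N}(\widehat{\bm \Sigma}_k) \supset \mathcal{N}(\widehat{\bm \Sigma})$ is exactly what was just proved. If $\lambda = 1$, then $\widehat{\bm \Sigma}_k(\lambda) = \widehat{\bm \Sigma}$ and the inclusion is an equality. For $\lambda \in (0,1)$, both coefficients $1-\lambda$ and $\lambda$ are strictly positive, so the lemma gives $\mathcal{N}(\widehat{\bm \Sigma}_k(\lambda)) = \mathcal{N}(\widehat{\bm \Sigma}_k) \cap \mathcal{N}(\widehat{\bm \Sigma}) = \mathcal{N}(\widehat{\bm \Sigma})$, using the already established fact that $\mathcal{N}(\widehat{\bm \Sigma}) \subset \mathcal{N}(\widehat{\bm \Sigma}_k)$. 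This delivers the desired chain in all three cases.

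The only subtle point, and the one I would write out carefully, is the PSD-sum null-space identity; the case split over the endpoints of $[0,1]$ is just bookkeeping. Note also that the result can be stated as equality $\mathcal{N}(\widehat{\bm \Sigma}_k(\lambda)) = \mathcal{N}(\widehat{\bm \Sigma})$ whenever $\lambda > 0$, which is what the subsequent rank argument of the paper (that pooling raises the rank from $\operatorname{rank}(\widehat{\bm \Sigma}_k)$ to $\operatorname{rank}(\widehat{\bm \Sigma})$ for $\lambda \in (0,1]$) exploits.
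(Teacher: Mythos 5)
Your argument is essentially the paper's: the published proof establishes both inclusions by exactly the quadratic-form computation you isolate as your PSD-sum lemma. It writes $0 = \bm z^{T}\widehat{\bm \Sigma}_k(\lambda)\bm z = (1-\lambda)\bm z^{T}\widehat{\bm \Sigma}_k\bm z + \lambda \bm z^{T}\widehat{\bm \Sigma}\bm z$ and invokes positive semidefiniteness to force each summand to vanish, and it handles the second inclusion the same way via $\widehat{\bm \Sigma} = N^{-1}\sum_{k}n_k\widehat{\bm \Sigma}_k$. Packaging this as the identity $\mathcal{N}(a\bm A + b\bm B) = \mathcal{N}(\bm A)\cap\mathcal{N}(\bm B)$ for PSD $\bm A,\bm B$ and $a,b>0$ (with the square-root justification that $\bm v^{T}\bm M\bm v=0$ implies $\bm M\bm v=\bm 0$, which the paper leaves implicit) is a cleaner organization of the same idea, and your remark that $\mathcal{N}\{\widehat{\bm \Sigma}_k(\lambda)\} = \mathcal{N}(\widehat{\bm \Sigma})$ for $\lambda>0$ is precisely what the subsequent rank corollary exploits.

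The one genuine slip is your $\lambda=0$ case. At $\lambda=0$ the first inclusion reads $\mathcal{N}(\widehat{\bm \Sigma}_k)\subset\mathcal{N}(\widehat{\bm \Sigma})$, but you dispose of it by citing $\mathcal{N}(\widehat{\bm \Sigma}_k)\supset\mathcal{N}(\widehat{\bm \Sigma})$, which is the reverse (second) inclusion. No repair is possible, because the claim is actually false at $\lambda=0$: take $K=2$ with $\widehat{\bm \Sigma}_1$ and $\widehat{\bm \Sigma}_2$ singular but with complementary column spaces, so that $\widehat{\bm \Sigma}$ is nonsingular while $\mathcal{N}(\widehat{\bm \Sigma}_1)\ne\{\bm 0\}$. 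The lemma must therefore be read as asserting the first inclusion only for $\lambda>0$; note that the paper's own proof has the same blind spot, since extracting $\bm z^{T}\widehat{\bm \Sigma}\bm z=0$ from the convex combination requires $\lambda>0$, and the only downstream use of that inclusion (the corollary on rank$\{\widehat{\bm \Sigma}_k(\lambda)\}$) is explicitly restricted to $\lambda\in(0,1]$. The correct fix is to state the restriction rather than to attempt a proof of the endpoint case.
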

\begin{proof}
Let $\bm z \in \mathcal{N}\{\widehat{\bm \Sigma}_k(\lambda)\}$ for some $k = 1,
\ldots, K$. Hence, $0 = \bm z^{T} \widehat{\bm \Sigma}_k(\lambda) \bm z = (1 -
\lambda) \bm z^{T} \widehat{\bm \Sigma}_k \bm z + \lambda \bm z^{T} \widehat{
  \bm \Sigma} \bm z$. Because $\widehat{\bm \Sigma}_k, \widehat{\bm \Sigma}\in
\mathbb{R}_{p \times p}^{\ge}$, we have $\bm z \in \mathcal{N}(\widehat{
  \bm \Sigma})$ and $\bm z \in \mathcal{N}(\widehat{\bm \Sigma}_k)$. In particular,
we have that $\mathcal{N}\{\widehat{\bm \Sigma}_k(\lambda)\} \subset
\mathcal{N}(\widehat{\bm \Sigma})$. Now, suppose $\bm z \in
\mathcal{N}(\widehat{\bm \Sigma})$. Similarly, we have that $0 = \bm z^{T}
\widehat{\bm \Sigma} \bm z = N^{-1} \sum_{k = 1}^K n_k \bm z^{T} \widehat{
  \bm \Sigma}_k \bm z$, which implies that $\bm z \in \mathcal{N}(\widehat{
  \bm \Sigma}_k)$ because $\widehat{\bm \Sigma}_k \in \mathbb{R}_{p \times
  p}^{\ge}$. Therefore, $\mathcal{N}(\widehat{\bm \Sigma}) \subset
\mathcal{N}(\widehat{\bm \Sigma}_k)$.
\end{proof}

In Lemma \ref{lemma:rda-tilde-Sigma_k} below, we derive an alternative
expression for $\tilde{\bm \Sigma}_k$ in terms of the matrix of eigenvectors of
$\widehat{\bm \Sigma}$. Let $\widehat{\bm \Sigma} = \bm U \bm D \bm U^{T}$ be
the eigendecomposition of $\widehat{\bm \Sigma}$ such that $\bm D \in
\mathbb{R}_{p \times p}^{\ge}$ is the diagonal matrix of eigenvalues of
$\widehat{\bm \Sigma}$ with
\begin{align*}
  \bm D = \begin{bmatrix}
    \bm D_q & \bm 0 \\
    \bm{0} & \bm 0_{p-q}
    \end{bmatrix},
\end{align*}
$\bm D_q \in \mathbb{R}_{q \times q}^{>}$ is the diagonal matrix consisting of
the positive eigenvalues of $\widehat{\bm \Sigma}$, the columns of $\bm U \in
\mathbb{R}_{p \times p}$ are the corresponding orthonormal eigenvectors of
$\widehat{\bm \Sigma}$, and rank$(\widehat{\bm \Sigma}) = q$. Then, we partition
$\bm U = (\bm U_1, \bm U_2)$ such that $\bm U_1 \in \mathbb{R}_{p \times q}$ and
$\bm U_2 \in \mathbb{R}_{p \times (p - q)}$.

\begin{lemma}\label{lemma:rda-tilde-Sigma_k}
Let $\widehat{\bm \Sigma} = \bm U \bm D \bm U^{T}$ be the eigendecomposition of
$\widehat{\bm \Sigma}$ as above, and suppose that rank$(\widehat{\bm \Sigma}) =
q \le p$. Then, we have
\begin{align}
	\tilde{\bm \Sigma}_k &= \bm U
    \begin{bmatrix}
      \bm W_k & \bm 0 \\
      \bm 0 & \gamma \bm I_{p-q}
    \end{bmatrix}
    \bm U^{T}, \quad k = 1, \ldots, K,\label{eq:rda-matrix}
\intertext{where}
\bm W_k &= \alpha_k \{(1 - \lambda) \bm U_1^{T} \widehat{\bm \Sigma}_k \bm U_1 + \lambda \bm D_q\} + \gamma \bm I_{q}.\label{eq:Wk}
\end{align}
\end{lemma}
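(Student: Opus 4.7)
The plan is to conjugate $\tilde{\bm \Sigma}_k$ by $\bm U$, exploit the fact that $\bm U$ is orthogonal to insert $\bm U \bm U^T = \bm I_p$, and then use Lemma \ref{lemma:null-spaces} to argue that the block of $\widehat{\bm \Sigma}_k$ associated with $\bm U_2$ vanishes. Specifically, starting from \eqref{eq:hdrda-cov} and \eqref{eq:sig-lambda-alternative}, I would write
\begin{align*}
\bm U^T \tilde{\bm \Sigma}_k \bm U = \alpha_k (1-\lambda)\bm U^T \widehat{\bm \Sigma}_k \bm U + \alpha_k \lambda \bm U^T \widehat{\bm \Sigma}\bm U + \gamma \bm U^T \bm U,
\end{align*}
and then it suffices to identify each of the three terms as a block matrix of the form (top-left block) $\oplus$ (bottom-right block).

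The term $\bm U^T \widehat{\bm \Sigma}\bm U$ is immediately $\bm D = \bm D_q \oplus \bm 0_{p-q}$ by the eigendecomposition, and $\bm U^T \bm U = \bm I_q \oplus \bm I_{p-q}$. The nontrivial step is the $\widehat{\bm \Sigma}_k$ term. Partitioning $\bm U = (\bm U_1, \bm U_2)$, I need to show that the three blocks $\bm U_1^T \widehat{\bm \Sigma}_k \bm U_2$, $\bm U_2^T \widehat{\bm \Sigma}_k \bm U_1$, and $\bm U_2^T \widehat{\bm \Sigma}_k \bm U_2$ all vanish, leaving only $\bm U_1^T \widehat{\bm \Sigma}_k \bm U_1$ in the top-left position. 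This is the main (and essentially only) obstacle in the proof.

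To handle it, I note that the columns of $\bm U_2$ are, by construction, eigenvectors of $\widehat{\bm \Sigma}$ with eigenvalue $0$, hence they lie in $\mathcal{N}(\widehat{\bm \Sigma})$. By Lemma \ref{lemma:null-spaces}, $\mathcal{N}(\widehat{\bm \Sigma}) \subset \mathcal{N}(\widehat{\bm \Sigma}_k)$, so every column of $\bm U_2$ lies in $\mathcal{N}(\widehat{\bm \Sigma}_k)$. Since $\widehat{\bm \Sigma}_k$ is symmetric positive-semidefinite, this gives both $\widehat{\bm \Sigma}_k \bm U_2 = \bm 0$ and $\bm U_2^T \widehat{\bm \Sigma}_k = \bm 0$, which kills all three of the unwanted blocks simultaneously.

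Combining the three pieces yields
\begin{align*}
\bm U^T \tilde{\bm \Sigma}_k \bm U = \bigl(\alpha_k(1-\lambda)\bm U_1^T \widehat{\bm \Sigma}_k \bm U_1 + \alpha_k \lambda \bm D_q + \gamma \bm I_q\bigr) \oplus \gamma \bm I_{p-q} = \bm W_k \oplus \gamma \bm I_{p-q},
\end{align*}
and multiplying on the left by $\bm U$ and on the right by $\bm U^T$ gives \eqref{eq:rda-matrix}. I do not anticipate any further technical difficulty beyond the null-space inclusion step, which is already supplied by Lemma \ref{lemma:null-spaces}.
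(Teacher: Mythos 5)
Your proof is correct and follows essentially the same route as the paper's: conjugate by $\bm U$, use Lemma \ref{lemma:null-spaces} to conclude that the columns of $\bm U_2$ lie in $\mathcal{N}(\widehat{\bm \Sigma}_k)$ so that $\widehat{\bm \Sigma}_k \bm U_2 = \bm 0$, and read off the block structure. If anything, your phrasing is slightly more careful than the paper's (which says the columns of $\bm U_2$ \emph{span} $\mathcal{N}(\widehat{\bm \Sigma}_k)$, when only containment in that null space is guaranteed and needed).
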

\begin{proof}
From Lemma \ref{lemma:null-spaces}, the columns of $\bm U_2$ span the null space
of $\widehat{\bm \Sigma}_k$, which implies that $\widehat{\bm \Sigma}_k \bm U_2
= \bm 0_{p \times (p - q)}$. Hence,
\begin{align*}
  \bm U^{T} \widehat{\bm \Sigma}_k \bm U = \begin{bmatrix}
    \bm U_1^{T} \widehat{\bm \Sigma}_k \bm U_1 & \bm 0 \\
    \bm 0 & \bm 0_{p-q}
  \end{bmatrix}, \quad k = 1, \ldots, K.
\end{align*}
Thus, $\bm U^{T} \tilde{\bm \Sigma}_k \bm U = \alpha_k \{(1 - \lambda) \bm U^{T}
\widehat{\bm \Sigma}_k \bm U + \lambda \bm D\} + \gamma \bm I_p$, and
\eqref{eq:rda-matrix} holds because $\bm U$ is orthogonal.
\end{proof}

As an immediate consequence of Lemma \ref{lemma:rda-tilde-Sigma_k}, we have the
following corollary.

\begin{cor}
Let $\widehat{\bm \Sigma}_k(\lambda)$ be defined as in
\eqref{eq:sig-lambda-alternative}. Then, for $\lambda \in (0, 1]$,
  rank$\{\widehat{\bm \Sigma}_k(\lambda)\} = q$, $k = 1, \ldots, K$.
\end{cor}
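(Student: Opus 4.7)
My plan is to upgrade the one-sided containment $\mathcal{N}\{\widehat{\bm \Sigma}_k(\lambda)\}\subset\mathcal{N}(\widehat{\bm \Sigma})$ from Lemma \ref{lemma:null-spaces} to an equality whenever $\lambda\in(0,1]$, and then invoke rank--nullity. Since $\widehat{\bm \Sigma}_k(\lambda)$ acts on $\mathbb{R}_{p\times 1}$, the identity of null spaces immediately forces rank$\{\widehat{\bm \Sigma}_k(\lambda)\}=p-\dim\mathcal{N}(\widehat{\bm \Sigma})=q$, which is the desired conclusion.

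The forward inclusion $\mathcal{N}\{\widehat{\bm \Sigma}_k(\lambda)\}\subset\mathcal{N}(\widehat{\bm \Sigma})$ is the content of Lemma \ref{lemma:null-spaces} for $\lambda\in(0,1)$ (the argument uses strict positivity of both mixing weights together with the positive semidefiniteness of $\widehat{\bm \Sigma}_k$ and $\widehat{\bm \Sigma}$), and the inclusion is automatic when $\lambda=1$ because $\widehat{\bm \Sigma}_k(1)=\widehat{\bm \Sigma}$. For the reverse inclusion, I would take any $\bm z\in\mathcal{N}(\widehat{\bm \Sigma})$ and note that Lemma \ref{lemma:null-spaces} also yields $\bm z\in\mathcal{N}(\widehat{\bm \Sigma}_k)$. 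Then
\begin{align*}
\widehat{\bm \Sigma}_k(\lambda)\,\bm z = (1-\lambda)\widehat{\bm \Sigma}_k\bm z + \lambda\widehat{\bm \Sigma}\bm z = \bm 0,
\end{align*}
so $\bm z\in\mathcal{N}\{\widehat{\bm \Sigma}_k(\lambda)\}$, establishing equality of the two null spaces.

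There is essentially no obstacle beyond care with the endpoint $\lambda=0$, which is precisely why that case is excluded: at $\lambda=0$ the estimator collapses to $\widehat{\bm \Sigma}_k$, whose rank is bounded by $n_k$ and can strictly exceed the nullity of $\widehat{\bm \Sigma}$ in only one direction of the containment. An alternative route, essentially equivalent, would read the rank off directly from Lemma \ref{lemma:rda-tilde-Sigma_k}: the proof of that lemma shows $\bm U^{T}\widehat{\bm \Sigma}_k(\lambda)\bm U=\bigl[(1-\lambda)\bm U_1^{T}\widehat{\bm \Sigma}_k\bm U_1+\lambda\bm D_q\bigr]\oplus\bm 0_{p-q}$, and for $\lambda\in(0,1]$ the $q\times q$ block is the sum of a positive-semidefinite matrix and the positive-definite matrix $\lambda\bm D_q$, hence nonsingular; orthogonality of $\bm U$ then gives rank$\{\widehat{\bm \Sigma}_k(\lambda)\}=q$. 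Either route is short, and I would favor the null-space argument because it parallels the style of Lemma \ref{lemma:null-spaces} and avoids referencing the eigendecomposition of Lemma \ref{lemma:rda-tilde-Sigma_k}.
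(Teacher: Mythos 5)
Your proof is correct, but your preferred route is not the one the paper takes. The paper's entire proof is a one-line appeal to Lemma \ref{lemma:rda-tilde-Sigma_k} with $\gamma = 0$: writing $\widehat{\bm \Sigma}_k(\lambda) = \bm U(\bm W_k \oplus \bm 0_{p-q})\bm U^{T}$ with $\bm W_k = (1-\lambda)\bm U_1^{T}\widehat{\bm \Sigma}_k\bm U_1 + \lambda \bm D_q$ nonsingular for $\lambda \in (0,1]$ --- precisely the ``alternative route'' you sketch at the end and then set aside. Your primary argument instead upgrades the inclusion $\mathcal{N}\{\widehat{\bm \Sigma}_k(\lambda)\} \subset \mathcal{N}(\widehat{\bm \Sigma})$ to an equality (the reverse inclusion being immediate from $\mathcal{N}(\widehat{\bm \Sigma}) \subset \mathcal{N}(\widehat{\bm \Sigma}_k)$) and then invokes rank--nullity. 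Both arguments are sound. Yours is more self-contained and arguably more careful: it makes explicit that the forward inclusion in Lemma \ref{lemma:null-spaces} genuinely requires $\lambda > 0$ (and handles $\lambda = 1$ separately), a point the paper glosses over since Lemma \ref{lemma:null-spaces} is stated without restricting $\lambda$ even though its forward inclusion fails at $\lambda = 0$. The paper's route buys brevity and reuses machinery already built for the main theorem, but it silently assumes $\alpha_k > 0$ when reading the rank of $\widehat{\bm \Sigma}_k(\lambda)$ off of $\tilde{\bm \Sigma}_k$ with $\gamma = 0$; your null-space argument avoids $\alpha_k$ and the eigendecomposition entirely. The one cosmetic wrinkle in your write-up is the closing remark about $\lambda = 0$, which is phrased confusingly --- the point is simply that at $\lambda = 0$ only the containment $\mathcal{N}(\widehat{\bm \Sigma}) \subset \mathcal{N}(\widehat{\bm \Sigma}_k)$ survives, so the rank can drop strictly below $q$.
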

\begin{proof}
The proof follows when we set $\gamma = 0$ in Lemma
\ref{lemma:rda-tilde-Sigma_k}.
\end{proof}

Thus, by incorporating each $\bm x_i$ into the estimation of $\bm \Sigma_k$, we
increase the rank of $\widehat{\bm \Sigma}_k(\lambda)$ to $q \approx N$ if
$\lambda \ne 0$. Next, we provide an essential result that enables us to prove
that \eqref{eq:hdrda} is invariant to adjustments to the eigenvalues of $\tilde{
  \bm \Sigma}_k$ corresponding to the null space of $\widehat{\bm \Sigma}$.

\begin{lemma}\label{lemma:RDA-constant-term}
Let $\bm U_2$ be defined as above. Then, for all $\bm x \in \mathbb{R}_{p \times
  1}$, $\bm U_2^{T} (\bm x - \xbar_k) = \bm U_2^{T} (\bm x - \xbar_{k'})$, $1
\le k, k' \le K$, where $k \ne k'$.
\end{lemma}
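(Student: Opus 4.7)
The claim is equivalent to showing $\bm U_2^T(\xbar_k - \xbar_{k'}) = \bm 0$ for every pair $(k, k')$, since the common term $\bm U_2^T \bm x$ cancels on both sides. Because the columns of $\bm U_2$ furnish an orthonormal basis for $\mathcal{N}(\widehat{\bm \Sigma})$, this is in turn equivalent to showing each between-class mean difference lies in the orthogonal complement $\mathcal{N}(\widehat{\bm \Sigma})^{\perp} = \text{col}(\widehat{\bm \Sigma})$. The plan is first to use Lemma~\ref{lemma:null-spaces} to extract strong pointwise linear constraints on the training observations from each class, and then to pass from these constraints to the statement about the class means.

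For the first step, fix any column $\bm u$ of $\bm U_2$. Lemma~\ref{lemma:null-spaces} gives $\bm u \in \mathcal{N}(\widehat{\bm \Sigma}_k)$, so $0 = \bm u^T \widehat{\bm \Sigma}_k \bm u = n_k^{-1}\sum_{y_i = k}\bigl(\bm u^T(\bm x_i - \xbar_k)\bigr)^2$. Since each summand is a non-negative square, this forces $\bm u^T \bm x_i = \bm u^T \xbar_k$ for every training observation $\bm x_i$ with $y_i = k$. Repeating across classes, $\bm u^T \bm x_i$ is constant on each class, equal to $\bm u^T \xbar_{y_i}$.

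The remaining step, and the delicate one, is to show that this per-class constant is the same across all classes, i.e., $\bm u^T \xbar_k = \bm u^T \xbar_{k'}$. I would attempt this by averaging the identity $\bm u^T \bm x_i = \bm u^T \xbar_{y_i}$ over $i$ to produce the grand-mean projection $\bm u^T \xbar = N^{-1}\sum_k n_k\bigl(\bm u^T \xbar_k\bigr)$, and then coupling it with an auxiliary identity coming from the structure of $\widehat{\bm \Sigma}$ that forces every per-class projection to collapse to the common value $\bm u^T \xbar$. This is where I expect the main technical friction: the null-space information supplied by $\widehat{\bm \Sigma}$ directly controls only the \emph{within}-class deviations $\bm x_i - \xbar_{y_i}$, whereas the desired conclusion is a \emph{between}-class statement about $\xbar_k - \xbar_{k'}$. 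The bridge presumably comes from expressing $\xbar_k - \xbar_{k'}$ as a suitable linear combination of the $\bm x_i - \xbar_{y_i}$ (placing it in $\text{col}(\widehat{\bm \Sigma})$), so that $\bm U_2^T(\xbar_k - \xbar_{k'}) = \bm 0$, after which subtracting from $\bm U_2^T \bm x$ delivers the lemma.
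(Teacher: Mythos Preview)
Your diagnosis is exactly right: the passage from the within-class constraint $\bm u^{T}\bm x_i=\bm u^{T}\xbar_{y_i}$ to the between-class conclusion $\bm u^{T}\xbar_k=\bm u^{T}\xbar_{k'}$ is where the argument must do real work, and the bridge you propose cannot be built. The column space of $\widehat{\bm\Sigma}$ is precisely the span of the vectors $\bm x_i-\xbar_{y_i}$, so writing $\xbar_k-\xbar_{k'}$ as a linear combination of the $\bm x_i-\xbar_{y_i}$ is equivalent to asserting $\xbar_k-\xbar_{k'}\in\mathcal{C}(\widehat{\bm\Sigma})$, which is the very thing to be proved and is \emph{not} a consequence of the hypotheses. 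A concrete obstruction: take $p=3$, $K=2$, class~1 observations $(\pm1,0,0)^{T}$ and class~2 observations $(0,\pm1,1)^{T}$. Then $\xbar_1=(0,0,0)^{T}$, $\xbar_2=(0,0,1)^{T}$, $\widehat{\bm\Sigma}$ has null space spanned by $(0,0,1)^{T}=\bm U_2$, and $\bm U_2^{T}(\xbar_1-\xbar_2)=-1\neq0$. So no rearrangement of the within-class information will close the gap you flagged.

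The paper's proof takes a shorter route: it asserts $\bm x_i\in\mathcal{C}(\widehat{\bm\Sigma})$ for every $i$, whence $\bm U_2^{T}\bm x_i=\bm 0$, and then writes $\xbar_k-\xbar_{k'}=\sum_i\beta_i\bm x_i$ with $\beta_i=(n_kn_{k'})^{-1}\{n_{k'}I(y_i=k)-n_kI(y_i=k')\}$. But the first step is valid only when $\bm x_i$ denotes the class-centered observation $\bm x_i-\xbar_{y_i}$ (the convention adopted just before \eqref{eq:sig-lambda-alternative2}), while the second identity holds only for the \emph{raw} $\bm x_i$; under the centered reading $\sum_i\beta_i\bm x_i=\bm 0$, not $\xbar_k-\xbar_{k'}$. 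In other words, the paper's argument silently switches between the two readings of $\bm x_i$ and, once disentangled, rests on the same unproved inclusion $\xbar_k-\xbar_{k'}\in\mathcal{C}(\widehat{\bm\Sigma})$ that stalled your attempt. The counterexample above shows that inclusion can fail, so the difficulty you identified is genuine rather than merely technical.
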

\begin{proof}
Let $\bm x \in \mathbb{R}_{p \times 1}$, and suppose that $1 \le k, k' \le K$.
Recall that $\bm U_2 \in \mathcal{N}(\widehat{\bm \Sigma})$, which implies that
$\bm U_2^{T} \in \mathcal{C}(\widehat{\bm \Sigma})^{\perp}$ \citep[Lemma
  1.2.5]{Kollo:2005vp}. Now, because $\bm x_i \in \mathcal{C}(\widehat{ \bm
  \Sigma})$ $(i = 1, \ldots, N)$, $\bm U_2^{T} \bm x_i = \bm 0_{p-q}$.  Hence,
$\bm 0_{p-q} = \sum_{i=1}^N \beta_i \bm U_2^{T}\bm x_i = \bm U_2^{T}(\xbar_k -
\xbar_{k'})$, where $\beta_i = (n_k n_{k'})^{-1} \{ I(y_i = k) n_{k'} - I(y_i =
k') n_k \}$. Therefore, $\bm U_2^{T} (\bm x - \xbar_k) = \bm U_2^{T} (\bm x -
\xbar_{k'})$.
\end{proof}

We now present our main result, where we decompose \eqref{eq:hdrda} and show
that the term requiring the largest computational costs does not contribute to
the classification of an unlabeled observation performed using Lemma
\ref{lemma:RDA-constant-term}. Hence, we reduce \eqref{eq:hdrda} to an
equivalent, more computationally efficient decision rule.

\begin{thm}
Let $\tilde{\bm \Sigma}_k$ and $\bm W_k$ be defined as in \eqref{eq:rda-matrix}
and \eqref{eq:Wk}, respectively, and let $\bm U_1$ be defined as above. Then,
the decision rule in \eqref{eq:hdrda} is equivalent to
\begin{align}
		D_{HDRDA}(\bm x) &= \argmin_k  (\bm x - \xbar_k)^{T} \bm U_1 \bm W_k^{-1} \bm U_1^{T} (\bm x - \xbar_k) + \log | \bm W_k |. \label{eq:hdrda-decomposed}
\end{align}
\end{thm}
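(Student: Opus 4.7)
The plan is to substitute the block-diagonal form of $\tilde{\bm \Sigma}_k$ from Lemma \ref{lemma:rda-tilde-Sigma_k} into the quadratic form and log-determinant of \eqref{eq:hdrda}, split everything into a $\bm U_1$-part and a $\bm U_2$-part, and then use Lemma \ref{lemma:RDA-constant-term} to show that the $\bm U_2$-part is constant in $k$ and can be dropped from the argmin.

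More concretely, since $\bm U = (\bm U_1, \bm U_2)$ is orthogonal, Lemma \ref{lemma:rda-tilde-Sigma_k} gives $\tilde{\bm \Sigma}_k = \bm U(\bm W_k \oplus \gamma \bm I_{p-q})\bm U^{T}$, and hence
\[
\tilde{\bm \Sigma}_k^{+} \;=\; \bm U\bigl(\bm W_k^{-1} \oplus \gamma^{+}\bm I_{p-q}\bigr)\bm U^{T}
\;=\; \bm U_1 \bm W_k^{-1} \bm U_1^{T} + \gamma^{+}\,\bm U_2 \bm U_2^{T},
\]
where $\gamma^{+}$ follows the convention defined in the notation section (so the identity is valid whether $\gamma>0$ or $\gamma=0$, with the understanding that $\bm W_k$ is invertible for the parameter ranges of interest, e.g. by the corollary when $\lambda \in (0,1]$). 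First I would expand $(\bm x - \xbar_k)^{T}\tilde{\bm \Sigma}_k^{+}(\bm x - \xbar_k)$ using this decomposition to obtain
\[
(\bm x - \xbar_k)^{T} \bm U_1 \bm W_k^{-1} \bm U_1^{T} (\bm x - \xbar_k)
\;+\; \gamma^{+}\,(\bm x - \xbar_k)^{T} \bm U_2 \bm U_2^{T} (\bm x - \xbar_k).
\]
By Lemma \ref{lemma:RDA-constant-term}, $\bm U_2^{T}(\bm x - \xbar_k)$ does not depend on $k$, so the second summand is invariant across classes and vanishes from the argmin.

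Next I would handle the log-determinant. Because $\bm U$ is orthogonal and $\tilde{\bm \Sigma}_k$ is block-diagonal in the $\bm U$-basis, for $\gamma > 0$ we have $|\tilde{\bm \Sigma}_k| = |\bm W_k|\,\gamma^{p-q}$, and the factor $(p-q)\log\gamma$ is constant in $k$. For $\gamma = 0$, the paper's convention sets $|\tilde{\bm \Sigma}_k|$ equal to the product of the positive eigenvalues of $\tilde{\bm \Sigma}_k$, which by Lemma \ref{lemma:rda-tilde-Sigma_k} is exactly $|\bm W_k|$. In either case, $\log|\tilde{\bm \Sigma}_k|$ differs from $\log|\bm W_k|$ by a $k$-free constant.

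Combining these two reductions, every term in \eqref{eq:hdrda} that is not $k$-dependent cancels out of the argmin, and the remaining expression is precisely \eqref{eq:hdrda-decomposed}. The main subtlety I anticipate is the clean treatment of the $\gamma = 0$ case, since the pseudoinverse and the redefined determinant must line up with the block structure; the rest is direct algebra with orthogonal matrices once Lemmas \ref{lemma:rda-tilde-Sigma_k} and \ref{lemma:RDA-constant-term} are invoked.
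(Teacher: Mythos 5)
Your proposal is correct and follows essentially the same route as the paper's proof: it uses the block-diagonal form $\tilde{\bm \Sigma}_k^{+} = \bm U(\bm W_k^{-1} \oplus \gamma^{+}\bm I_{p-q})\bm U^{T}$ and $|\tilde{\bm \Sigma}_k| = \gamma^{p-q}|\bm W_k|$ from Lemma \ref{lemma:rda-tilde-Sigma_k}, drops the $k$-invariant $\bm U_2$ quadratic term via Lemma \ref{lemma:RDA-constant-term}, and discards the constant $(p-q)\log\gamma$. Your explicit check that the $\gamma=0$ determinant convention reduces to $|\bm W_k|$ is a slightly more careful treatment of a point the paper handles only by remarking that it avoids computing $\log 0$.
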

\begin{proof}
From \eqref{eq:rda-matrix}, we have that
\begin{align*}
  \tilde{\bm \Sigma}_k^{+} = \bm U \begin{bmatrix}
    \bm W_k^{-1} & \bm 0 \\
    \bm 0 & \gamma^{+} \bm I_{p-q}
  \end{bmatrix}
  \bm U^{T}
\end{align*}
and $|\tilde{\bm \Sigma}_k| = \gamma^{p-q}| \bm W_k |$, $k = 1, \ldots,
K$. Therefore, for all $\bm x \in \mathbb{R}_{p \times 1}$, we have that
\begin{align*}
	(\bm x - \xbar_k)^{T} \tilde{\bm \Sigma}_k^{+}(\bm x - \xbar_k)  + \log |\tilde{\bm \Sigma}_k| &= (\bm x - \xbar_k)^{T} \bm U_1 \bm W_k^{-1} \bm U_1^{T} (\bm x - \xbar_k)\\
	&+ \gamma^{+} (\bm x - \xbar_k)^{T} \bm U_2 \bm U_2^{T} (\bm x - \xbar_k) + \log | \bm W_k |\\
  &+ (p - q) \log \gamma.
\end{align*}
Because $\gamma$ is constant for $k = 1, \ldots, K$, we can omit the $(p - q)
\log \gamma$ term and particularly avoid the calculation of $\log 0$ for $\gamma
= 0$. Then, the proof follows from Lemma \ref{lemma:RDA-constant-term} because
$\bm U_2^{T} (\bm x - \xbar_k)$ is constant for $k = 1, \ldots, K$.
\end{proof}

Using Theorem 1, we can avoid the time-consuming inverses and determinants of $p
\times p$ covariance matrices in \eqref{eq:hdrda} and instead calculate these
same operations on $\bm W_k \in \mathbb{R}_{q \times q}$ in
\eqref{eq:hdrda-decomposed}. The substantial computational improvements arise
because our proposed classifier in \eqref{eq:hdrda} is invariant to the term
$\bm U_2$, thus yielding an equivalent classifier in \eqref{eq:hdrda-decomposed}
with a substantial reduction in computational complexity. Here, we demonstrate
that the computational efficiency in calculating the inverse and determinant of
$\bm W_k$ can be further improved via standard matrix operations when we show
that the inverses and determinants of $\bm W_k$ can be performed on matrices of
size $n_k \times n_k$.

\begin{proposition}\label{proposition:hdrda-W_k}
Let $\bm W_k$ be defined as above. Then, $|\bm W_k| = |\bm \Gamma_k| |\bm Q_k|$
and
\begin{align}
		\bm W_k^{-1} &= \bm \Gamma_k^{-1} - n_k^{-1} \alpha_k(1 - \lambda) \bm \Gamma_k^{-1} \bm U_1^{T} \bm X_k^{T} \bm Q_k^{-1} \bm X_k \bm U_1 \bm \Gamma_k^{-1},\label{eq:W_k_inv}
    \intertext{where}
    \bm Q_k &= \bm I_{n_k} + n_k^{-1} \alpha_k(1 - \lambda) \bm X_k \bm U_1 \bm \Gamma_k^{-1} \bm U_1^{T} \bm X_k^{T}\label{eq:Q_k}
    \intertext{and}
    \bm \Gamma_k &= \alpha_k \lambda \bm D_q + \gamma \bm I_q\label{eq:Gamma_k}.
\end{align}
\end{proposition}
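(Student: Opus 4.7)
The plan is to recognize that $\bm W_k$ is a symmetric low-rank update of the diagonal matrix $\bm \Gamma_k$, and then apply the Woodbury identity for the inverse together with the matrix determinant lemma (Sylvester's identity) for the determinant. Both identities are standard tools for exactly this situation, where one wants to trade a $q \times q$ inverse for an $n_k \times n_k$ inverse (useful when $n_k \ll q$).

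First, I would substitute the relation $\widehat{\bm \Sigma}_k = n_k^{-1} \bm X_k^{T} \bm X_k$, introduced earlier in the paper, into the definition \eqref{eq:Wk}. After pulling out the scalar $\alpha_k(1-\lambda)/n_k$, this rewrites $\bm W_k$ as
\begin{align*}
\bm W_k \;=\; \bm \Gamma_k \;+\; n_k^{-1}\alpha_k(1-\lambda)\,(\bm X_k\bm U_1)^{T}(\bm X_k\bm U_1),
\end{align*}
a rank-$n_k$ symmetric perturbation of the diagonal matrix $\bm \Gamma_k = \alpha_k\lambda \bm D_q + \gamma \bm I_q$ defined in \eqref{eq:Gamma_k}. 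Since $\bm D_q \in \mathbb{R}_{q \times q}^{>}$ and $\gamma \ge 0$, the matrix $\bm \Gamma_k$ is diagonal and invertible whenever $\alpha_k\lambda > 0$ or $\gamma > 0$, so $\bm \Gamma_k^{-1}$ is computed entrywise and no additional machinery is required.

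Next, I would invoke the Woodbury identity in the form $(\bm A + \bm B^{T}\bm B)^{-1} = \bm A^{-1} - \bm A^{-1}\bm B^{T}(\bm I + \bm B\bm A^{-1}\bm B^{T})^{-1}\bm B\bm A^{-1}$, taking $\bm A = \bm \Gamma_k$ and $\bm B = [n_k^{-1}\alpha_k(1-\lambda)]^{1/2}\,\bm X_k \bm U_1$. Collecting the scalar factor, the ``capacitance'' matrix $\bm I + \bm B \bm \Gamma_k^{-1}\bm B^{T}$ becomes precisely $\bm Q_k$ as in \eqref{eq:Q_k}, and the Woodbury expansion reproduces \eqref{eq:W_k_inv} verbatim. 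The corresponding determinant identity $|\bm W_k| = |\bm \Gamma_k|\,|\bm Q_k|$ follows in one line from the matrix determinant lemma $|\bm A + \bm P \bm R| = |\bm A|\,|\bm I + \bm R \bm A^{-1}\bm P|$, applied with $\bm P = n_k^{-1}\alpha_k(1-\lambda)\bm U_1^{T}\bm X_k^{T}$ and $\bm R = \bm X_k \bm U_1$.

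There is no serious obstacle here: the proof is essentially an application of two classical matrix identities, and the only thing to be careful with is tracking the scalar $n_k^{-1}\alpha_k(1-\lambda)$ so that it appears in the appropriate place in both \eqref{eq:W_k_inv} and \eqref{eq:Q_k}. The only conceptual point worth noting is the invertibility of $\bm \Gamma_k$, which is guaranteed under the implicit operating regime of the \emph{HDRDA} classifier (either a positive shrinkage $\gamma$ or a positive pooling $\lambda$ together with $\alpha_k > 0$), since otherwise $\bm W_k$ itself may be singular and the stated formulas would not make sense.
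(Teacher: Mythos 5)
Your proposal is correct and follows essentially the same route as the paper: rewrite $\bm W_k$ as $\bm \Gamma_k$ plus the rank-$n_k$ term $n_k^{-1}\alpha_k(1-\lambda)\bm U_1^{T}\bm X_k^{T}\bm X_k\bm U_1$, then apply the Sherman--Morrison--Woodbury formula for the inverse and the matrix determinant lemma for $|\bm W_k|$ (the paper cites Theorems 18.1.1 and 18.2.8 of Harville for exactly these two identities). Your closing remark on the invertibility of $\bm \Gamma_k$ matches the paper's own caveat, stated immediately after the proposition, that the formula fails when $(\lambda,\gamma)=(0,0)$.
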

\begin{proof}
First, we write $\bm W_k = n_k^{-1} \alpha_k (1 - \lambda) \bm U_1^{T} \bm
X_k^{T} \bm X_k \bm U_1 + \bm \Gamma_k$. To calculate $|\bm W_k|$, we apply
Theorem 18.1.1 from \cite{Harville:2008wja}, which states that $|\bm A + \bm B
\bm T \bm C| = |\bm A| |\bm T| |\bm T^{-1} + \bm C \bm A^{-1} \bm B|$, where
$\bm A \in \mathbb{R}_{a \times a}^{>}$, $\bm B \in \mathbb{R}_{a \times b}$,
$\bm T \in \mathbb{R}_{b \times b}^{>}$, and $\bm C \in \mathbb{R}_{b \times
  a}$. Thus, setting $\bm A = \bm \Gamma_k$, $\bm B = \alpha_k (1 - \lambda) \bm
U_1^{T} \bm X_k^{T}$, $\bm T = \bm I_{n_k}$, and $\bm C = \bm X_k \bm U_1$, we
have $|\bm W_k| = |\bm \Gamma_k| |\bm Q_k|$. Similarly, \eqref{eq:W_k_inv}
follows from the well-known Sherman-Woodbury formula \citep[Theorem
  18.2.8]{Harville:2008wja} because $(\bm A + \bm B \bm T \bm C)^{-1} =
\bm A^{-1} - \bm A^{-1} \bm B (\bm T^{-1} + \bm C \bm A^{-1} \bm B)^{-1} \bm C
\bm A^{-1}$.
\end{proof}

Notice that $\bm \Gamma_k$ is singular when $(\lambda, \gamma) = (0, 0)$
because $\bm \Gamma_k = \bm 0_q$, in which case we use the formulation in
\eqref{eq:hdrda-decomposed} instead. Also, notice that if $\alpha_k$ is
constant across the $K$ classes, then $\bm \Gamma_k$ in \eqref{eq:Gamma_k} is
independent of $k$. Consequently, $|\bm \Gamma_k|$ is constant across the $K$
classes and need not be calculated in \eqref{eq:hdrda-decomposed}.

\subsection{Model Selection}

Thus far, we have presented the \emph{HDRDA} classifier and its properties that
facilitate an efficient calculation of the decision rule. Here, we describe an
efficient model-selection procedure along with pseudocode in Algorithm
\ref{alg:hdrda_model_selection} to select the optimal tuning-parameter estimates
from the Cartesian product of candidate values $\{\lambda_g\}_{g=1}^G \times
\{\gamma_h\}_{h=1}^H$. We estimate the $V$-fold cross-validation error rate for
each candidate pair and select $(\widehat{\lambda}, \widehat{\gamma})$, which
attains the minimum error rate. To calculate the $V$-fold cross-validation, we
partition the original training data into $V$ mutually exclusive and exhaustive
folds that have approximately the same number of observations. Then, for $v = 1,
\ldots, V$, we classify the observations in the $v$th fold by training a
classifier on the remaining $V - 1$ folds. We calculate the cross-validation
error as the proportion of misclassified observations across the $V$ folds.

\IncMargin{1em}
\begin{algorithm}
  \SetKwInOut{Input}{input}
  \SetKwInOut{Output}{output}

  \Input{
    Data matrix $\bm X$\\
    Parameter grid $\{\lambda_g\}_{g=1}^G \times \{\gamma_h\}_{h=1}^H$
  }
  \Output{Optimal Estimates $(\hat{\lambda}, \hat{\gamma})$}
  \BlankLine
  \For{$v \leftarrow 1$ \KwTo $V$} {
    Partition $\bm X$ into $\bm X_{train} \in \mathbb{R}_{N \times p}$ and
    $\bm X_{test} \in \mathbb{R}_{N_T \times p}$

    \For{$k\leftarrow 1$ \KwTo $K$}{
      Extract $\bm X_k \in \mathbb{R}_{n_k \times p}$ from $\bm X_{train}$

      Compute sample mean $\xbar_k$ from $\bm X_k$

      Center $\bm X_k \leftarrow \bm X_k - \bm 1_{n_k} \xbar_k^T$
    }

    $\bm X_c \leftarrow [\bm X_1^T, \ldots, \bm X_K^T]^T$

    Compute the compact SVD $\bm X_c = \bm M_q \bm D_q \bm U_1^T$

    Transform $\bm X_c \leftarrow \bm X_c \bm U_1$

    Transform $\bm X_{test} \leftarrow \bm X_{test} \bm U_1$

    \For{$k \leftarrow 1$ \KwTo $K$} {
      Extract $\bm X_k \in \mathbb{R}_{n_k \times q}$ from $\bm X_c$

      Recompute sample mean $\xbar_k$ from $\bm X_k$
    }

    \For{$(\lambda, \gamma) \in \{\lambda_g\}_{g=1}^G \times \{\gamma_h\}_{h=1}^H$} {
      \For{$k \leftarrow 1$ \KwTo $K$} {
        Compute $\bm Q_k$ using \eqref{eq:Q_k}

        Compute $\bm \Gamma_k$ using \eqref{eq:Gamma_k}

        Compute $\bm W_k^{-1}$ using \eqref{eq:W_k_inv}

        Compute $|\bm W_k| = |\bm \Gamma_k| |\bm Q_k|$

        Compute $(\bm x - \xbar_k)^{T} \bm U_1 \bm W_k^{-1} \bm U_1^{T} (\bm x -
        \xbar_k) + \log | \bm W_k |$ for each row $\bm x$ of $\bm X_{test}$
      }
      Classify test observations $\bm X_{test}$ using \eqref{eq:hdrda-decomposed}

      Compute the number of misclassified test observations $\#\{\text{Error}_v(\lambda, \gamma)\}$
    }
   }
  Compute $\widehat{\text{Error}}(\lambda, \gamma) = N^{-1} \sum_{v=1}^V \#\{\text{Error}_v(\lambda, \gamma)\}$

  Report optimal $(\hat{\lambda}, \hat{\gamma}) \leftarrow \argmin_{(\lambda, \gamma)} \widehat{\text{Error}}(\lambda, \gamma)$
  \caption{Model selection for the \emph{HDRDA} classifier}
  \label{alg:hdrda_model_selection}
\end{algorithm}
\DecMargin{1em}

A primary contributing factor to the efficiency of Algorithm
\ref{alg:hdrda_model_selection} is our usage of the compact singular value
decomposition (SVD). Rather than computing the eigenvalue decomposition of
$\widehat{\bm \Sigma}$ to obtain $\bm U_1$, we instead obtain $\bm U_1$ by
computing the eigendecomposition of a much smaller $N \times N$ matrix when $p
\gg N$ \cite[Chapter~18.3.5]{Hastie:2008dt}. Applying the SVD, we decompose $\bm
X_c = \bm M \bm \Delta \bm U^{T}$, where $\bm M \in \mathbb{R}_{N \times p}$ is
orthogonal, $\bm \Delta \in \mathbb{R}_{p \times p}^{\ge}$ is a diagonal matrix
consisting of the singular values of $\bm X_c$, and $\bm U \in \mathbb{R}_{p
  \times p}$ is orthogonal. Recalling that $\widehat{\bm \Sigma} = N^{-1} \bm
X_c^{T} \bm X_c$, we have the eigendecomposition $\widehat{\bm \Sigma} = \bm U
\bm D \bm U^{T}$, where $\bm U$ is the matrix of eigenvectors of $\widehat{\bm
  \Sigma}$ and $\bm D = N^{-1} \bm \Delta$ is the diagonal matrix of eigenvalues
of $\widehat{\bm \Sigma}$. Now, we can obtain $\bm M$ and $\bm D$ efficiently
from the eigenvalue decomposition of the $N \times N$ matrix $\bm X_c \bm
X_c^{T} = \bm M \bm D \bm M^{T}$. Next, we compute $\bm U = \bm X_c^{T} \bm M
\bm D^{+/2}$, where
\begin{align*}
  \bm D^{+/2} = \begin{bmatrix}
    \bm D_q^{-1/2} & \bm 0 \\
    \bm 0 & \bm 0_{N-q}
  \end{bmatrix}.
\end{align*}
We then determine $q$, the number of numerically nonzero eigenvalues present in
$\bm D$, by calculating the number of eigenvalues that exceeds some tolerance
value, say, $1 \times 10^{-6}$. We then extract $\bm U_1$ as the first $q$
columns of $\bm U$.

As a result of the compact SVD, we need calculate $\bm X_c \bm U_1$ only once
per cross-validation fold, requiring $O(p q N) \approx O(p N^2)$ calculations.
Hence, the computational costs of expensive calculations, such as matrix
inverses and determinants, are greatly reduced because they are performed in the
$q$-dimensional subspace. Similarly, we reduce the dimension of the test data
set by calculating $\bm X_{test} \bm U_1$ once per fold. Conveniently, we see
that the most costly computation involved in $\bm Q_k$ and $\bm W_k^{-1}$ is
$\bm X_k \bm U_1$, which can be extracted from $\bm X_c \bm U_1$. Thus, after
the initial calculation of $\bm X_c \bm U_1$ per cross-validation fold, $\bm
Q_k$ requires $O(n_k q^2)$ operations. Because $\bm Q_k \in \mathbb{R}_{n_k
  \times n_k}$, both its determinant and inverse require $O(n_k^3)$
operations. Consequently, $\bm W_k^{-1}$ requires $O(n_k q^2)$ operations. Also,
the inverse of the diagonal matrix $\bm \Gamma_k^{-1} \in \mathbb{R}_{q \times
  q}$ requires $O(q)$ operations. Finally, we remark that $| \bm W_k |$ requires
$O(n_k^3)$ operations.

The expressions given in Proposition \ref{proposition:hdrda-W_k} also expedite
the selection of $\lambda$ and $\gamma$ via cross-validation because the most
time-consuming matrix operation involved in computing $\bm W_k^{-1}$ and $|\bm
W_k|$ is $\bm X_k \bm U_1 \in \mathbb{R}_{n_k \times q}$, which is independent
of $\lambda$ and $\gamma$. The subsequent operations in calculating $\bm
W_k^{-1}$ and $|\bm W_k|$ can be simply updated for different pairs of $\lambda$
and $\gamma$ without repeating the costly computations. Also, rather than calculating
$(\bm x - \xbar_k)^{T} \bm U_1 \bm W_k^{-1} \bm U_1^{T} (\bm x - \xbar_k)$
individually for each row $\bm x$ of $\bm X_{test}$, we can calculate $(\bm
X_{test} - \xbar_k \bm 1_k')' \bm U_1 \bm W_k^{-1} \bm U_1^{T} (\bm X_{test} -
\xbar_k \bm 1_k')$. The diagonal elements of the resulting matrix contain the
individual quadratic form of each test observation, $\bm x_t$.

\section{Timing Comparisons between RDA and HDRDA}
\label{sec:timing-comparisons}

In this section, we demonstrate that the computational performance of the model
selection employed in the \emph{HDRDA} classifier is substantially faster than
that of the \emph{RDA} classifier on small-sample, high-dimensional data
sets. The relative difference in runtime between the two classifiers drastically
increases as $p$ increases. To compare the two classifiers, we generated 25
observations from each of $K=4$ multivariate normal populations with mean
vectors $\bm \mu_1 = -3 \cdot \bm 1_p$, $\bm \mu_2 = -\bm 1_p$, $\bm \mu_3 = \bm
1_p$, and $\bm \mu_4 = 3 \cdot \bm 1_p$. We set the covariance matrix of each
population to the $p \times p$ identity matrix. For each data set generated, we
estimated the parameters $\lambda$ and $\gamma$ for both classifiers using a
grid of 5 equidistant candidate values between 0 and 1, inclusively. We set
$\alpha_k = 1 - \gamma$, $k = 1, \ldots, K$, in the \emph{HDRDA} classifier. At
each pair of $\lambda$ and $\gamma$, we computed the 10-fold cross-validation
error rate \citep{Hastie:2008dt}. Then, we selected the model that minimized the
10-fold cross-validation error rate.

We compared the runtime of both classifiers by increasing the number of features
from $p = 500$ to $p = 5000$ in increments of 500. Next, we generated 100 data
sets for each value of $p$ and computed the training and model selection runtime
of both classifiers. Our timing comparisons are based on our \emph{HDRDA}
implementation in the {\tt sparsediscrim} R package and the standard \emph{RDA}
implementation in the {\tt klaR} R package. All timing comparisons were
conducted on an Amazon Elastic Compute Cloud (EC2) {\tt c4.4xlarge} instance
using version 3.3.1 of the open-source statistical software {\tt R}. Our timing
comparisons can be reproduced with the code available at
\url{https://github.com/ramhiser/paper-hdrda}.

\subsection{Timing Comparison Results}

In Figure \ref{fig:timing-results}, we plotted the runtime of the model
selections for both the \emph{HDRDA} and \emph{RDA} classifiers as a function of
$p$. We observed that the \emph{HDRDA} classifier was substantially faster than
the \emph{RDA} classifier as $p$ increased. In the left panel of Figure
\ref{fig:timing-results}, we fit a quadratic regression line to the \emph{RDA}
runtimes and a simple linear regression model to the \emph{HDRDA} runtimes.  For
improved understanding, in the right panel we repeated the same scatterplot and
linear fit with the timings restricted to the observed range of the \emph{HDRDA}
timings. Figure \ref{fig:timing-results} suggests that the usage of a matrix
inverse and determinant in the {\tt klaR} R package's discriminant function
yielded model-selection timings that exceeded linear growth in $p$. Because the
\emph{HDRDA} classifier removes inverse and determinants, it was computationally
more efficient than the \emph{RDA} classifier, especially as $p$ increased. In
fact, when $p = 5000$, the \emph{RDA} classifier required
24.933 minutes on average to perform model selection, while
the \emph{HDRDA} classifier selected its optimal model in
2.979 seconds on average. Clearly, the model selection
employed by the \emph{HDRDA} classifier is substantially faster than that of the
\emph{RDA} classifier.

We quantified the relative timing comparisons between the two classifiers by
calculating the ratio of mean timings of the \emph{RDA} classifier to the
\emph{HDRDA} classifier for each value of $p$. We employed nonparametric
bootstrapping to estimate the mean ratio along with 95\% confidence
intervals. In Figure \ref{fig:timing-comparison-bootstrap}, the bootstrap
sampling distributions for the ratio of mean timings are given. First, we
observe that the mean relative timings increased as $p$ increased. For smaller
dimensions, the relative difference in computing was sizable with the average
ratio of the mean timings equal to 14.513 for
$p = 500$ and a 95\% confidence interval of
(14.191,
14.855). Furthermore, the ratio of mean
computing times suggested that the \emph{RDA} classifier is impractical for
higher dimensions. For instance, when $p = 5000$, the ratio of mean computing
times increased to 502.786 with a 95\%
confidence interval of (462.863,
546.396).

\[ \left[\text{Insert Figure \ref{fig:timing-results} approximately here }\right] \]

\[ \left[\text{Insert Figure \ref{fig:timing-comparison-bootstrap} approximately here }\right] \]

\section{Classification Study}
\label{sec:sims}

In this section, we compare our proposed classifier with four classifiers
recently proposed for small-sample, high-dimensional data along with the
random-forest classifier from \cite{Breiman:2001fb} using version 3.3.1 of the
open-source statistical software {\tt R}. Within our study, we included
penalized linear discriminant analysis from \cite{Witten:2011kc}, implemented in
the {\tt penalizedLDA} package. We also considered shrunken centroids
regularized discriminant analysis from \cite{Guo:2007te} in the {\tt rda}
package. Because the {\tt rda} package does not perform the authors' ``Min-Min''
rule automatically, we applied this rule within our {\tt R} code. We included
two modifications of diagonal linear discriminant analysis from
\cite{Tong:2012hw} and \cite{Pang:2009ik}, where the former employs an improved
mean estimator and the latter utilizes an improved variance estimator. Both
classifiers are available in the {\tt sparsediscrim} package. Finally, we
incorporated the random forest as a benchmark based on the findings of
\cite{FernandezDelgado:2014ul}, who concluded that the random forest is often
superior to other classifiers in benchmark studies. We used the implementation
of the random-forest classifier from the {\tt randomForest} package with 250
trees and 100 maximum nodes. For each classifer we explicitly set prior
probabilities as equal, if applicable. All other classifier options were set to
their default settings. Below, we refer to each classifier by the first author's
surname. All simulations were conducted on an Amazon EC2 {\tt c4.4xlarge}
instance. Our analyses can be reproduced via the code available at
\url{https://github.com/ramhiser/paper-hdrda}.

For the \emph{HDRDA} classifier in \eqref{eq:hdrda-decomposed}, we examined the
classification performance of two models. For the first \emph{HDRDA} model, we
set $\alpha_k = 1$, $k = 1, \ldots, K$, so that the covariance-matrix estimator
\eqref{eq:hdrda-cov} resembled \eqref{eq:ridge-estimator}. We estimated
$\lambda$ from a grid of 21 equidistant candidate values between 0 and 1,
inclusively. Similarly, we estimated $\gamma$ from a grid consisting of the
values $10^{-1}, \ldots, 10^4$, and $10^5$. We selected optimal estimates of
$\lambda$ and $\gamma$ using $10$-fold cross-validation. For the second model,
we set $\alpha_k = 1 - \gamma$, $k = 1, \ldots, K$, to resemble
\citeauthor{Friedman:1989tm}'s parameterization, and we estimated both $\lambda$
and $\gamma$ from a grid of 21 equidistant candidate values between 0 and 1,
inclusively.

We did not include the \emph{RDA} classifier in our classification study because
its training runtime was prohibitively slow on high-dimensional data in our
preliminary experiments. As shown in Section \ref{sec:timing-comparisons}, the
runtime of the \emph{RDA} classifier was drastically larger than that of the
\emph{HDRDA} classifier for a tuning grid of size $25 = 5 \times
5$. Consequently, a fair comparison between the \emph{RDA} and \emph{HDRDA}
classifiers would require model selection of $441 = 21 \times 21$ different
pairs of tuning parameters in the \emph{RDA} classifier. A tuning grid of this
size yielded excessively slow training runtimes for the \emph{RDA}
implementation from the {\tt klaR} R package.

\subsection{Simulation Study}

In this section we compare the competing classifiers using the simulation design
from \cite{Guo:2007te}. This design is widely used within the high-dimensional
classification literature, including the studies by \cite{Ramey:2013ji} and
\cite{Witten:2011kc}. First, we consider the block-diagonal covariance matrix
from \cite{Guo:2007te},
\begin{align}
  \bm\Sigma_k = \begin{bmatrix}
    \bm\Sigma^{(\rho_k)} & \bm 0_{100} & \bm 0_{100} & \cdots & \cdots & \cdots \\
    \bm 0_{100} & \bm\Sigma^{(-\rho_k)} & \bm 0_{100} & \bm 0_{100} & \cdots & \vdots \\
    \bm 0_{100} & \bm 0_{100} & \bm\Sigma^{(\rho_k)} & \bm 0_{100} & \cdots & \vdots \\
    \vdots & \bm 0_{100} & \bm 0_{100} & \bm\Sigma^{(-\rho_k)} & \bm 0_{100} & \vdots \\
    \vdots & \vdots & \vdots & \bm 0_{100} & \ddots & \vdots \\
    \cdots & \cdots & \cdots & \cdots & \cdots & \cdots \\
  \end{bmatrix},\label{eq:block-sigma}
\end{align}
where the $(i,j)$th entry of the block matrix $\bm\Sigma^{(\rho_k)} \in
\mathbb{R}_{100 \times 100}$ is
\begin{align*}
\bm\Sigma_{ij}^{(\rho_k)} = \{ \rho_k^{|i - j|} \}_{1 \le i,j \le 100}.
\end{align*}
The block-diagonal covariance structure in \eqref{eq:block-sigma} resembles
gene-expression data: within each block of pathways, genes are correlated, and
the correlation decays as a function of the distance between any two genes. The
original design from \cite{Guo:2007te} comprised two $p$-dimensional
multivariate normal populations with a common block-diagonal covariance matrix.

Although the design is indeed standard, the simulation configuration lacks
artifacts commonly observed in real data, such as skewness and extreme
outliers. As a result, we wished to investigate the effect of outliers on the
high-dimensional classifiers. To accomplish this goal, we generalized the
block-diagonal simulation configuration by sampling from a $p$-dimensional
multivariate contaminated normal distribution.  Denoting the PDF of the
$p$-dimensional multivariate normal distribution by $N_p(\bm x | \bm \mu, \bm
\Sigma)$, we write the PDF of the $k$th class as
\begin{align}
  p(\bm x | \omega_k) = (1 - \epsilon) N_p(\bm x | \bm \mu_k, \bm \Sigma_k) + \epsilon N_p(\bm x | \bm \mu_k, \eta \bm \Sigma_k),\label{eq:contaminated-pdf}
\end{align}
where $\epsilon \in [0, 1]$ is the probability that an observation is
contaminated (i.e., drawn from a distribution with larger variance) and $\eta >
1$ scales the covariance matrix $\bm \Sigma_k$ to increase the extremity of
outliers. For $\epsilon = 0$, we have the benchmark block-diagonal simulation
design from \cite{Guo:2007te}. As $\epsilon$ is increased, the average number of
outliers is increased. In our simulation, we let $\eta = 100$ and considered the
values of $\epsilon = 0$, $0.05$, $\ldots$, $0.50$.

We generated $K=3$ populations from \eqref{eq:contaminated-pdf} with $\bm
\Sigma_k$ given in \eqref{eq:block-sigma} and set the mean vector of class 1 to
$\bm \mu_1 = \bm 0_p$. Next, comparable to \cite{Guo:2007te}, the first 100
features of $\bm \mu_2$ were set to 1/2, while the rest were set to 0, i.e.,
$\bm \mu_2 = (\underbrace{1/2, \ldots, 1/2}_{100}, \underbrace{0, \ldots, 0}_{p
  - 100})$. For simplicity, we defined $\bm \mu_3 = -\bm \mu_2$. The three
populations differed in their mean vectors in the first 100 features
corresponding to the first block, and no difference in the means occurred in the
remaining blocks.

From each of the $K=3$ populations, we sampled 25 training observations ($n_k =
25$ for all $k$) and 10,000 test observations. After training each classifier on
the training data, we classified the test data sets and computed the proportion
of mislabeled test observations to estimate the classification error rate for
each classifier. Repeating this process 500 times, we computed the average of
the error-rate estimates for each classifier. We allowed the number of features
to vary from $p = 100$ to $p = 500$ in increments of 100 to examine the
classification accuracy as the feature dimension increased while maintaining a
small sample size. \cite{Guo:2007te} originally considered $\rho_k = 0.9$ for
all $k$. Alternatively, to explore the more realistic assumption of unequal
covariance matrices, we put $\rho_1 = 0.1$, $\rho_2 = 0.5$, and $\rho_3 = 0.9$.

\subsubsection{Simulation Results}

In Figure \ref{fig:sim-results}, we observed each classifier's average
classification error rates for the values of $\epsilon$ and $p$. Unsurprisingly,
the average error rate increased for each classifier as the contamination
probability $\epsilon$ increased regardless of the value of $p$. Sensitivity to
the presence of outliers was most apparent for the Pang, Tong, and Witten
classifiers. For smaller dimensions, the random-forest and \emph{HDRDA}
classifiers tended to outperform the remaining classifiers with the random
forest performing best. As the feature dimension increased with $p \ge 300$,
both \emph{HDRDA} classifiers outperformed all other classifiers, suggesting
that their inherent dimension reduction better captured the classificatory
information in the small training samples, even in the presence of outliers.

\[ \left[\text{Insert Figure \ref{fig:sim-results} approximately here }\right] \]

The Pang, Tong, and Witten methods yielded practically the same and consistently
the worst error rates when outliers were present with $\epsilon > 0$, suggesting
that these classifiers were sensitive to outliers. Notice, for example, that
when $p=400$, the error rates of the Pang, Tong, and Witten classifiers
increased dramatically from approximately 19\% when no outliers were present to
approximately 43\% when $\epsilon=0.05$. The sharp increase in average error
rates for these three classifiers continued as $\epsilon$ increased. Guo's
method always outperformed those of Pang, Witten, and Tong, but after outliers
were introduced, the Guo classifier's average error rate was not competitive
with the \emph{HDRDA} classifiers or the random-forest classifier.

\[ \left[\text{Insert Figure \ref{fig:sim-results-by-p} approximately here }\right] \]

In Figure \ref{fig:sim-results-by-p}, we again examine the simulation results as
a function of $p$ for a subset of the values of $\epsilon$. This set of plots
allows us to investigate the effect of feature dimensionality on classification
performance. When no outliers were present (i.e., $\epsilon=0$), the
random-forest classifier was outperformed by all other classifiers. Furthermore,
the \emph{HDRDA} classifiers were superior in terms of average error rate in
this setting. As $p$ increased, an elevation in average error rate was expected
for all classifiers, but the increase was not observed to be substantial.

For $\epsilon > 0$, we observed a different behavior in classification
performance. First, the Pang, Tong, and Witten methods, along with the
random-forest method, increased in average error rate as $p$ increased.
Contrarily, the performance of the \emph{HDRDA} and Guo classifiers was hardly
affected by $p$. Also, as discussed above, the \emph{HDRDA} classifiers were
superior to all other classifiers for large values of $p$ with only the
random-forest classifier outperforming them in smaller feature-dimension cases.

\subsection{Application to Gene Expression Data}

We compared the \emph{HDRDA} classifier to the five competing classifiers on six
benchmark gene-expression microarray data sets. First, we evaluated the
classification accuracy of each classifier by randomly partitioning the data set
under consideration such that $2/3$ of the observations were allocated as
training data and the remaining $1/3$ of the observations were allocated as a
test data set. To expedite the computational runtime, we reduced the training
data to the top 1000 variables by employing the variable-selection method
proposed by \cite{Dudoit:2002ev}. We then reduced the test data set to the same
1000 variables. After training each classifier on the training data, we
classified the test data sets and computed the proportion of mislabeled test
observations to estimate the classification error rate for each classifier.
Repeating this process 100 times, we computed the average of the error-rate
estimates for each classifier. We next provide a concise description of each
high-dimensional data set examined in our classification study.

\subsubsection{\cite{Chiaretti:2004gq} Data Set}

\cite{Chiaretti:2004gq} measured the gene-expression profiles for 128
individuals with acute lymphoblastic leukemia (ALL) using Affymetrix human 95Av2
arrays. Following \cite{Xu:2009fl}, we restricted the data set to $K = 2$
classes such that $n_1 = 74$ observations were without cytogenetic abnormalities
and $n_2 = 37$ observations had a detected BCR/ABL gene. The robust multichip
average normalization method was applied to all 12,625 gene-expression levels.

\subsubsection{\cite{Chowdary:2006kf} Data Set}

\cite{Chowdary:2006kf} investigated 52 matched pairs of tissues from colon and
breast tumors using Affymetrix U133A arrays and ribonucleic-acid (RNA)
amplification. Each tissue pair was gathered from the same patient and consisted
of a snap-frozen tissue and a tissue suspended in an RNAlater preservative.
Overall, 31 breast-cancer and 21 colon-cancer pairs were gathered, resulting in
$K = 2$ classes with $n_1 = 62$ and $n_2 = 42$. A purpose of the study was to
determine whether the disease state could be identified using 22,283
gene-expression profiles.

\subsubsection{\cite{Nakayama:2007fl} Data Set}

\cite{Nakayama:2007fl} acquired 105 gene-expression samples of 10 types of
soft-tissue tumors through an oligonucleotide microarray, including 16 samples
of synovial sarcoma (SS), 19 samples of myxoid/round cell liposarcoma (MLS), 3
samples of lipoma, 3 samples of well-differentiated liposarcoma (WDLS), 15
samples of dedifferentiated liposarcoma (DDLS), 15 samples of myxofibrosarcoma
(MFS), 6 samples of leiomyosarcoma (LMS), 3 samples of malignant nerve sheathe
tumor (MPNST), 4 samples of fibrosarcoma (FS), and 21 samples of malignant
fibrous histiocytoma (MFH). \cite{Nakayama:2007fl} determined from their data
that these 10 types fell into 4 broader groups: (1) SS; (2) MLS; (3) Lipoma,
WDLS, and part of DDLS; (4) Spindle cell and pleomorophic sarcomas including
DDLS, MFS, LMS, MPNST, FS, and MFH. Following \cite{Witten:2011kc}, we
restricted our analysis to the five tumor types having at least 15 observations.

\subsubsection{\cite{Shipp:2002ka} Data Set}

According to \cite{Shipp:2002ka}, approximately 30\%-40\% of adult non-Hodgkin
lymphomas are diffuse large B-cell lymphomas (DLBCLs). However, only a small
proportion of DLBCL patients are cured with modern chemotherapeutic regimens.
Several models have been proposed, such as the International Prognostic Index
(IPI), to determine a patient's curability. These models rely on clinical
covariates, such as age, to determine if the patient can be cured, and the
models are often ineffective. \cite{Shipp:2002ka} have argued that researchers
need more effective means to determine a patient's curability. The authors
measured 6,817 gene-expression levels from 58 DLBCL patient samples with
customized cDNA (lymphochip) microarrays to investigate the curability of
patients treated with cyclophosphamide, adriamycin, vincristine, and prednisone
(CHOP)-based chemotherapy. Among the 58 DLBCL patient samples, 32 are from cured
patients while 26 are from patients with fatal or refractory disease.

\subsubsection{\cite{Singh:2002fh} Data Set}

\cite{Singh:2002fh} have examined 235 radical prostatectomy specimens from
surgery patients between 1995 and 1997. The authors used oligonucleotide
microarrays containing probes for approximately 12,600 genes and expressed
sequence tags. They have reported that 102 of the radical prostatectomy
specimens are of high quality: 52 prostate tumor samples and 50 non-tumor
prostate samples.

\subsubsection{\cite{Tian:2003ht} Data Set}

\cite{Tian:2003ht} investigated the purified plasma cells from the bone marrow
of control patients along with patients with newly diagnosed multiple
myeloma. Expression profiles for 12,2625 genes were obtained via Affymetrix
U95Av2 microarrays. The plasma cells were subjected to biochemical and
immunohistochemical analyses to identify molecular determinants of osteolytic
lesions. For 36 multiple-myloma patients, focal bone lesions could not be
detected by magnetic resonance imaging (MRI), whereas MRI was used to detect
such lesions in 137 patients.

\subsubsection{Classification Results}

Similar to \cite{Witten:2011kc}, we report the average test error rates obtained
over 100 random training-test partitions in Table \ref{tab:microarray-results}
along with standard deviations of the test error rates in parentheses. The
\emph{HDRDA} and Guo classifiers were superior in classification performance for
the majority of the simulations. The \emph{HDRDA} classifiers yielded the best
classification accuracy on the Chowdary and Shipp data sets. Although the random
forest's accuracy slightly exceeded the \emph{HDRDA} classifiers on the Tian
data set, our proposed classifiers outperformed the other competing classifiers
considered here. Moreover, the \emph{HDRDA} classifiers yielded comparable
performance on five of the six data sets.

\[ \left[\text{Insert Table \ref{tab:microarray-results} approximately here }\right] \]

The average error-rate estimates for the Pang, Tong, and Witten classifiers were
comparable across all six data sets. Furthermore, the average error rates for
the Pang and Tong classifiers were approximately equal for all data sets except
for the Chiaretti dataset. This result suggests that the mean and variance
estimators used in lieu of the MLEs provided little improvement to
classification accuracies. However, we investigated the Pang classifier's poor
performance on the Chiaretti data set and determined that its variance estimator
exhibited numerical instability. The classifier's denominator was approximately
zero for both classes and led to the poor classification performance.

The random-forest classifier was competitive when applied to the Chowdary and
Singh data sets and yielded the smallest error rate of the considered
classifiers on the Tian data set. The fact that the \emph{HDRDA} and Guo
classifiers typically outperformed the random-forest classifier challenges the
claim of \cite{FernandezDelgado:2014ul} that random forests are typically
superior. Further studies should be performed to validate this statement in the
small-sample, high-dimensional setting.

Finally, the Pang, Tong, and Witten classifiers consistently yielded the largest
average error rates across the six data sets. Given that the standard deviations
were relatively large, we hesitate to generalize claims regarding the ranking of
these three classifiers in terms of the average error rate. However, the
classifiers' error rates and their variability across multiple random partitions
of each data set were large enough that we might question their benefit when
applied to real data.

\section{Discussion}
\label{sec:discussion}

We have demonstrated that our proposed \emph{HDRDA} classifier is competitive
with and often superior to random forests as well as the Witten, Pang, Tong, and
Guo classifiers. In fact, we have shown that the \emph{HDRDA} classifier often
yields superior classification accuracy when applied to small-sample,
high-dimensional data sets, confirming the assertions of \cite{Mai:2012bf} and
\cite{Fan:2012iq} that diagonal classifiers often yield inferior classification
performance when compared to other classification methods. Furthermore, we have
demonstrated that \emph{HDRDA} classifiers are more robust to the presence of
outliers than the diagonal classifiers despite their rapid computational
performance and their reduction in the number of parameters to estimate.

We also considered the popular penalized linear discriminant analysis from
\cite{Witten:2011kc} because it was specifically designed for high-dimensional
gene-expression data. We had expected its classification performance to be
competitive within our classification study and perhaps superior. Contrarily,
our empirical studies suggest that the classifier is sensitive to outliers and
unable to achieve comparable results with other classifiers designed for
small-sample, high-dimensional data. Also, despite the claims of
\cite{FernandezDelgado:2014ul} that random forests are typically superior to
other classifiers, we observed that they were indeed competitive but were
typically outperformed by classifiers developed for small-sample,
high-dimensional data.

We demonstrated that our \emph{HDRDA} implementation in the {\tt sparsediscrim}
R package can be used in practice with high-dimensional data sets. In our timing
comparisons, we showed that \emph{HDRDA} model selection could be employed on
data sets with $p = 5000$ in 2.979 seconds on
average. Contrarily, the \emph{RDA} classifier implemented in the {\tt klaR} R
package required 24.933 minutes on average to perform model
selection on data sets with $p = 5000$. Given that the \emph{RDA} classifier has
been shown to have excellent performance in the high-dimensional setting
\citep{Webb:2011vu} but is limited by its computationally intense
model-selection procedure, our work replaces the \emph{RDA} classifier for
high-dimensional data in practice. This result is reassuring because the
\emph{RDA} classifier remains widely popular in the literature. In fact,
variants of the RDA classifier have been applied to microarray data
\citep{Ching:2012fu,Li:2012ev,Tai:2007bk,Guo:2007te}, facial recognition
\citep{Zhang:2010va,Dai:2007vd,Lu:2005hq,Pima:2004vw,Lu:2003we}, handwritten
digit recognition \citep{Bouveyron:2007gx}, remote sensing
\citep{Tadjudin:1999fk}, seismic detection \citep{Anderson:2002kg}, and chemical
spectra \citep{Wu:1996us,Aeberhard:1993fp}.

The dimension reduction employed in this paper has reduced the dimension to
rank$(\widehat{\bm \Sigma}) = q$. An interesting extension of our work would
reduce the dimension $q$ further to a lower dimension $q_L < q$, perhaps using a
criterion similar to that of principal components analysis. While unclear
whether the classification performance would improve via such a method, the
efficiency of the model selection would certainly improve. Moreover, if $q_L =
2$ or $3$, low-dimensional graphical displays of high-dimensional data could be
obtained.

We thank Mrs. Joy Young for her numerous recommendations that enhanced the
quality of our writing.

\bibliographystyle{elsarticle-harv}
\bibliography{rda}

\clearpage

\begin{figure}
\begin{knitrout}
\definecolor{shadecolor}{rgb}{0.969, 0.969, 0.969}\color{fgcolor}
\includegraphics[width=.33\linewidth]{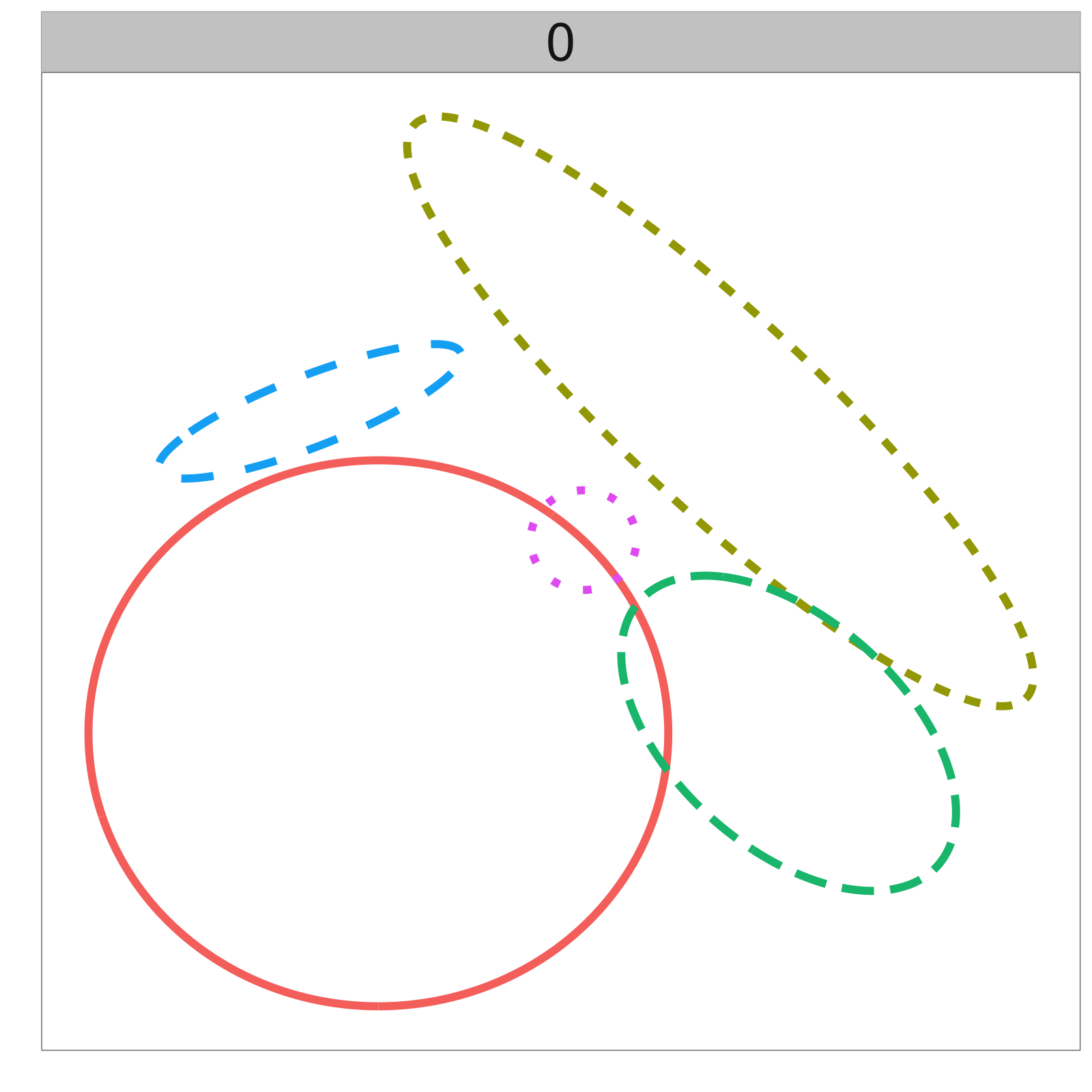} 
\includegraphics[width=.33\linewidth]{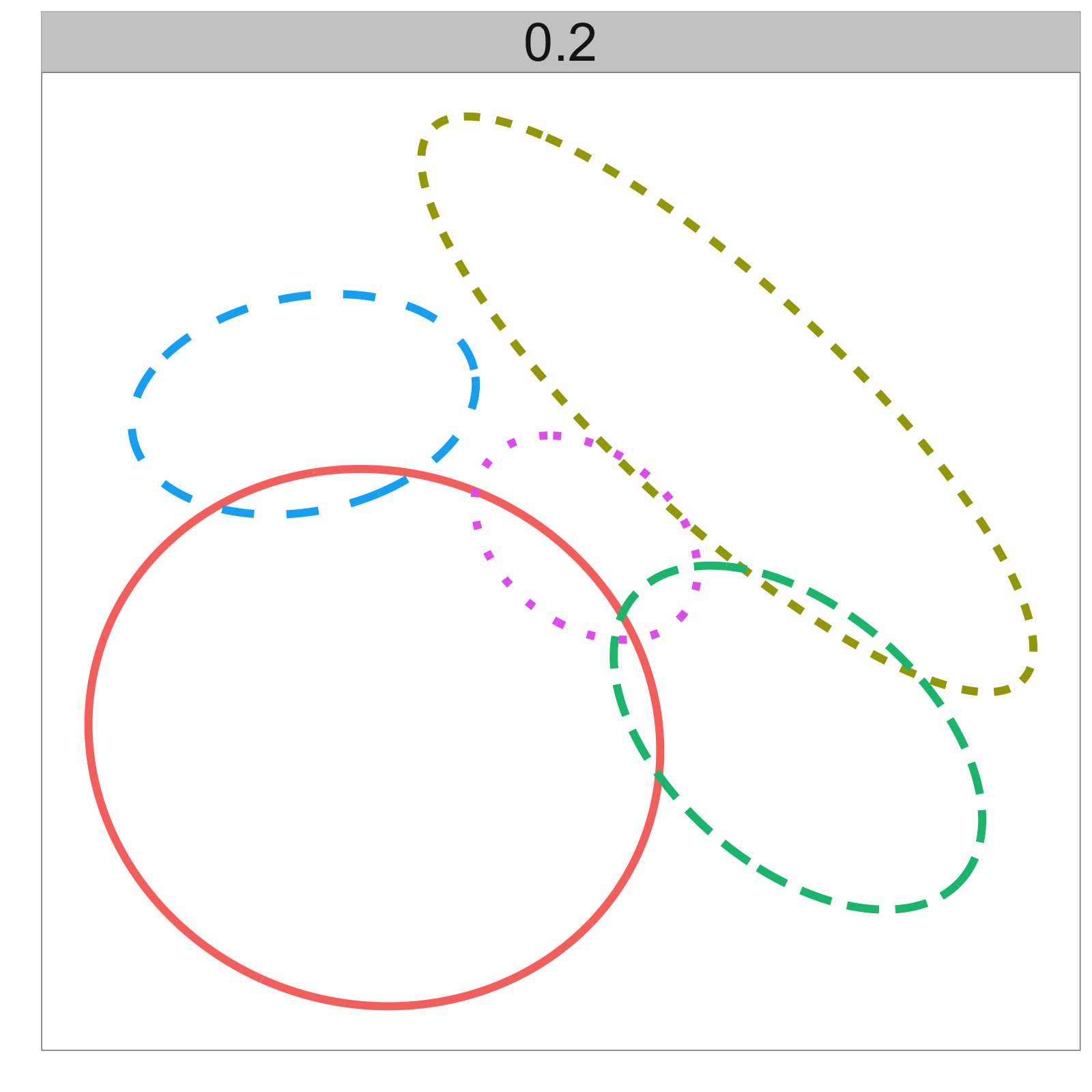} 
\includegraphics[width=.33\linewidth]{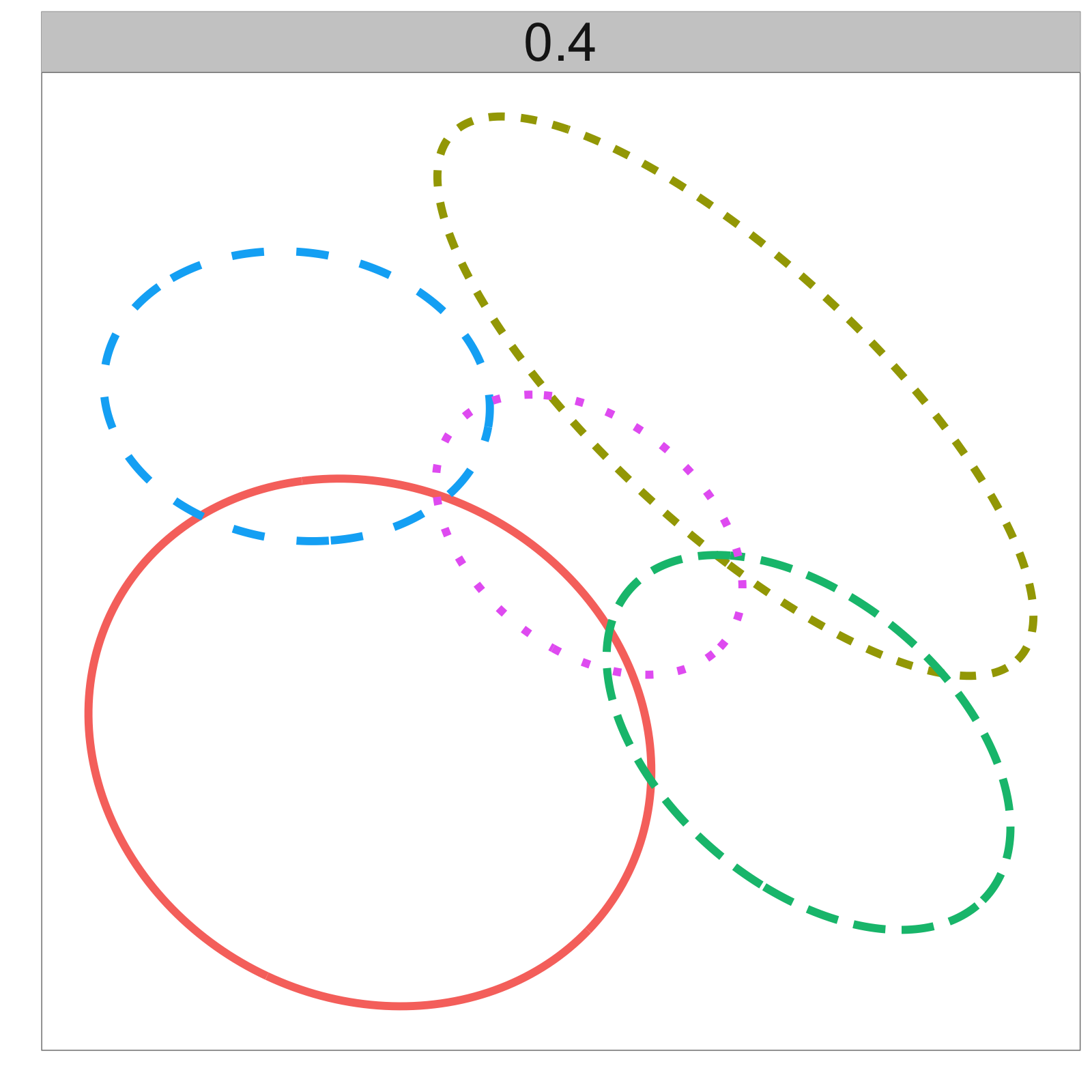} 
\includegraphics[width=.33\linewidth]{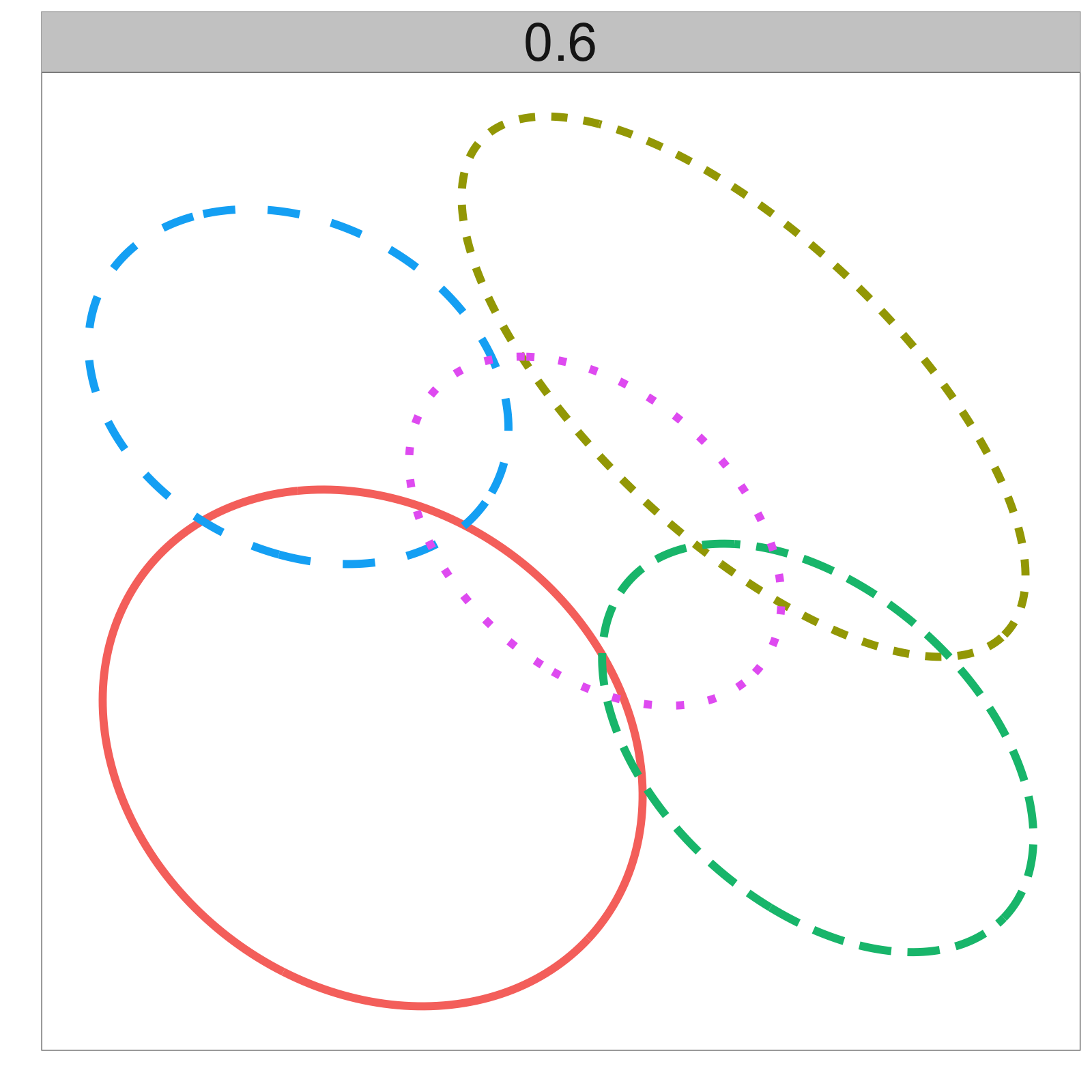} 
\includegraphics[width=.33\linewidth]{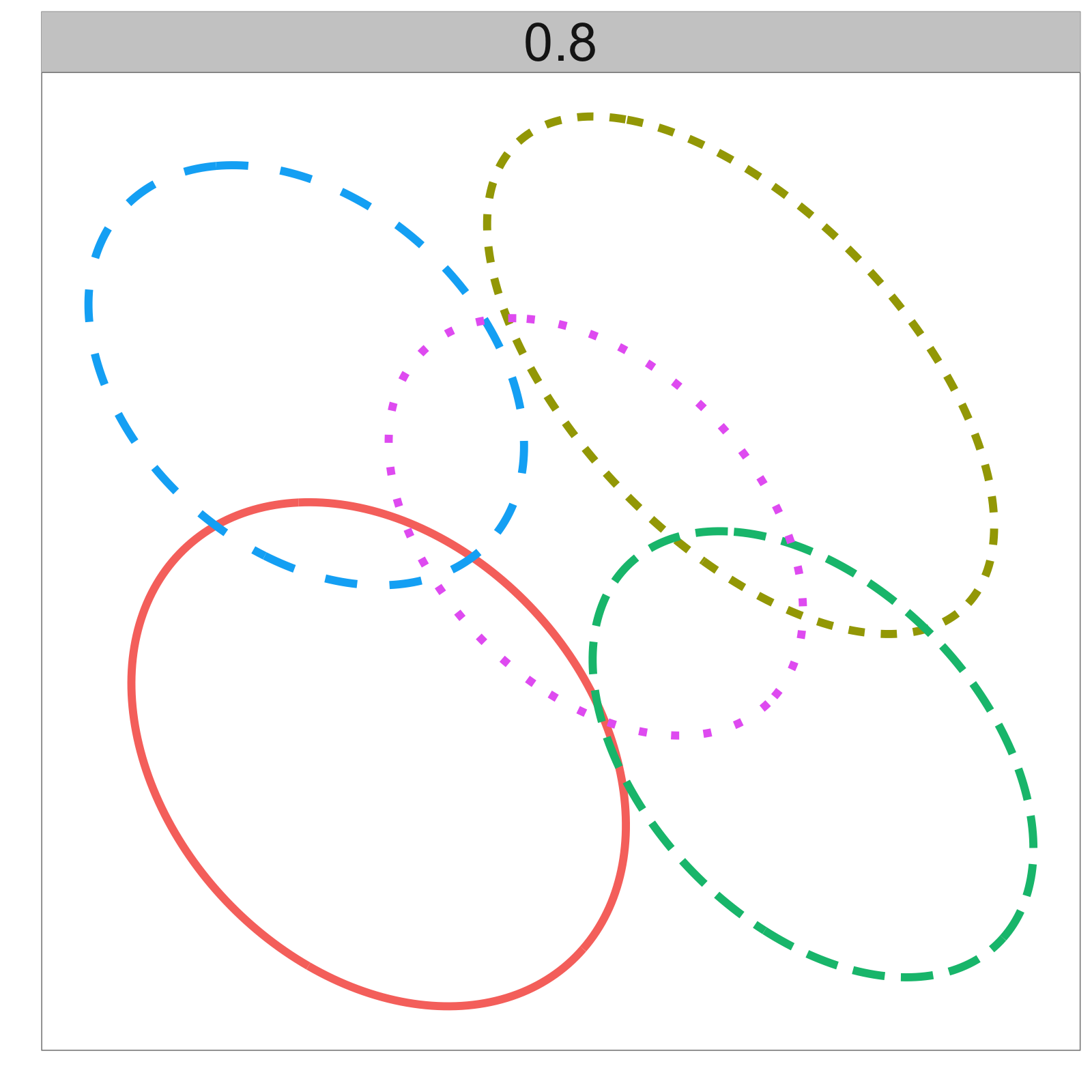} 
\includegraphics[width=.33\linewidth]{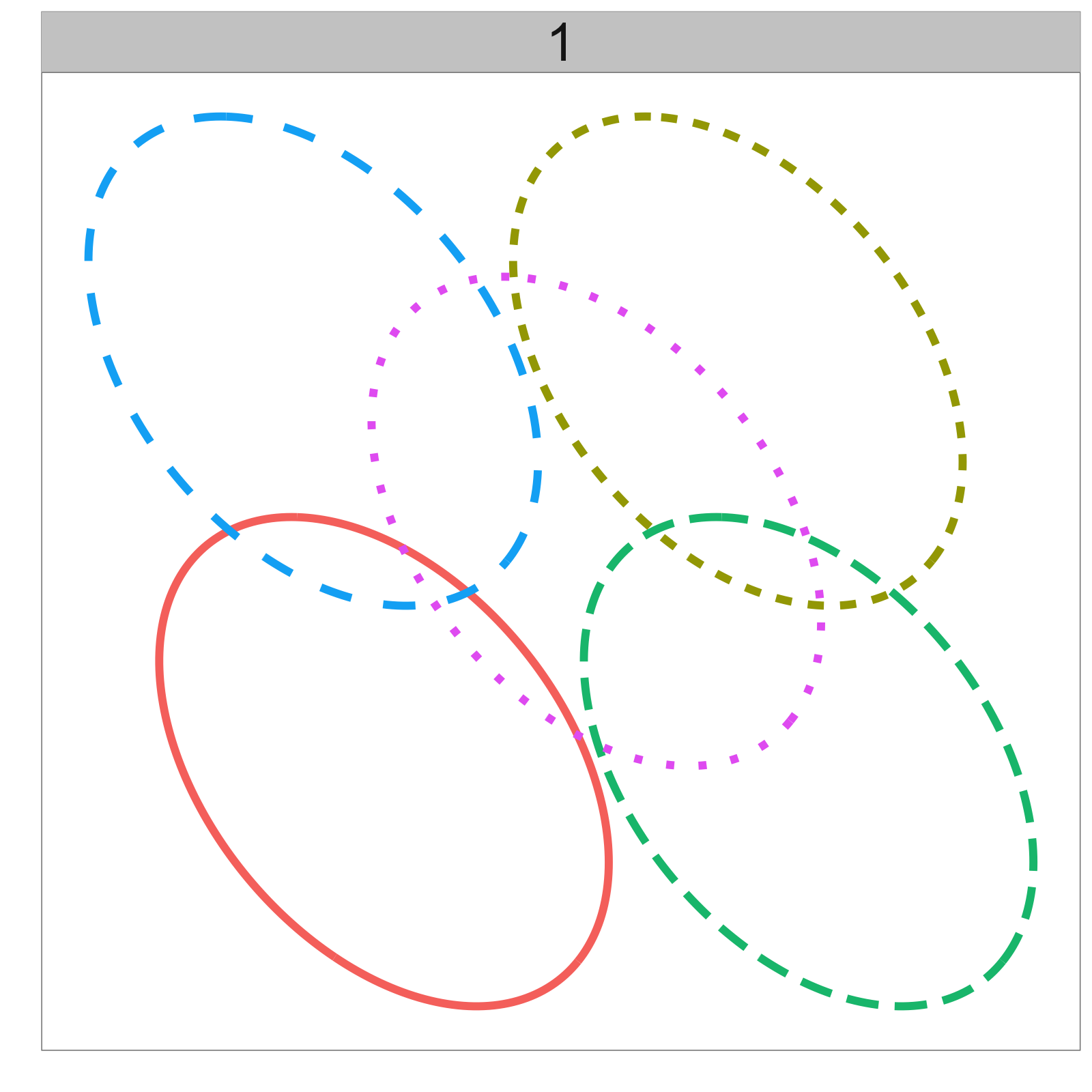} 

\end{knitrout}
\caption{Contours of five multivariate normal populations as a function of the
  pooling parameter $\lambda$.}
\label{fig:hdrda-contours}
\end{figure}

\clearpage

\begin{figure}
\begin{knitrout}
\definecolor{shadecolor}{rgb}{0.969, 0.969, 0.969}\color{fgcolor}
\includegraphics[width=\linewidth]{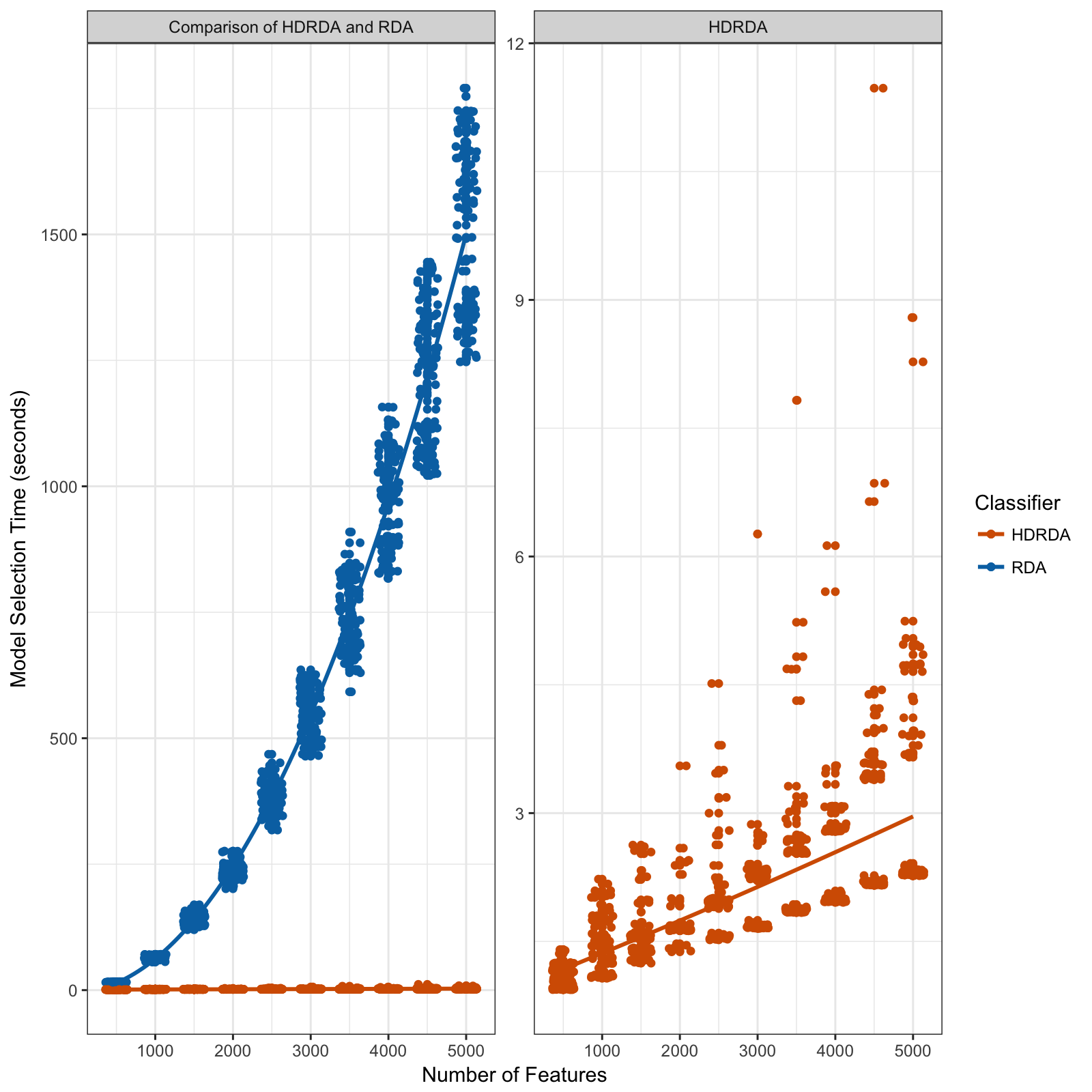} 

\end{knitrout}
\caption{Timing comparisons (in seconds) between HDRDA and RDA classifiers.}
\label{fig:timing-results}
\end{figure}

\clearpage

\begin{figure}
\begin{knitrout}
\definecolor{shadecolor}{rgb}{0.969, 0.969, 0.969}\color{fgcolor}
\includegraphics[width=\linewidth]{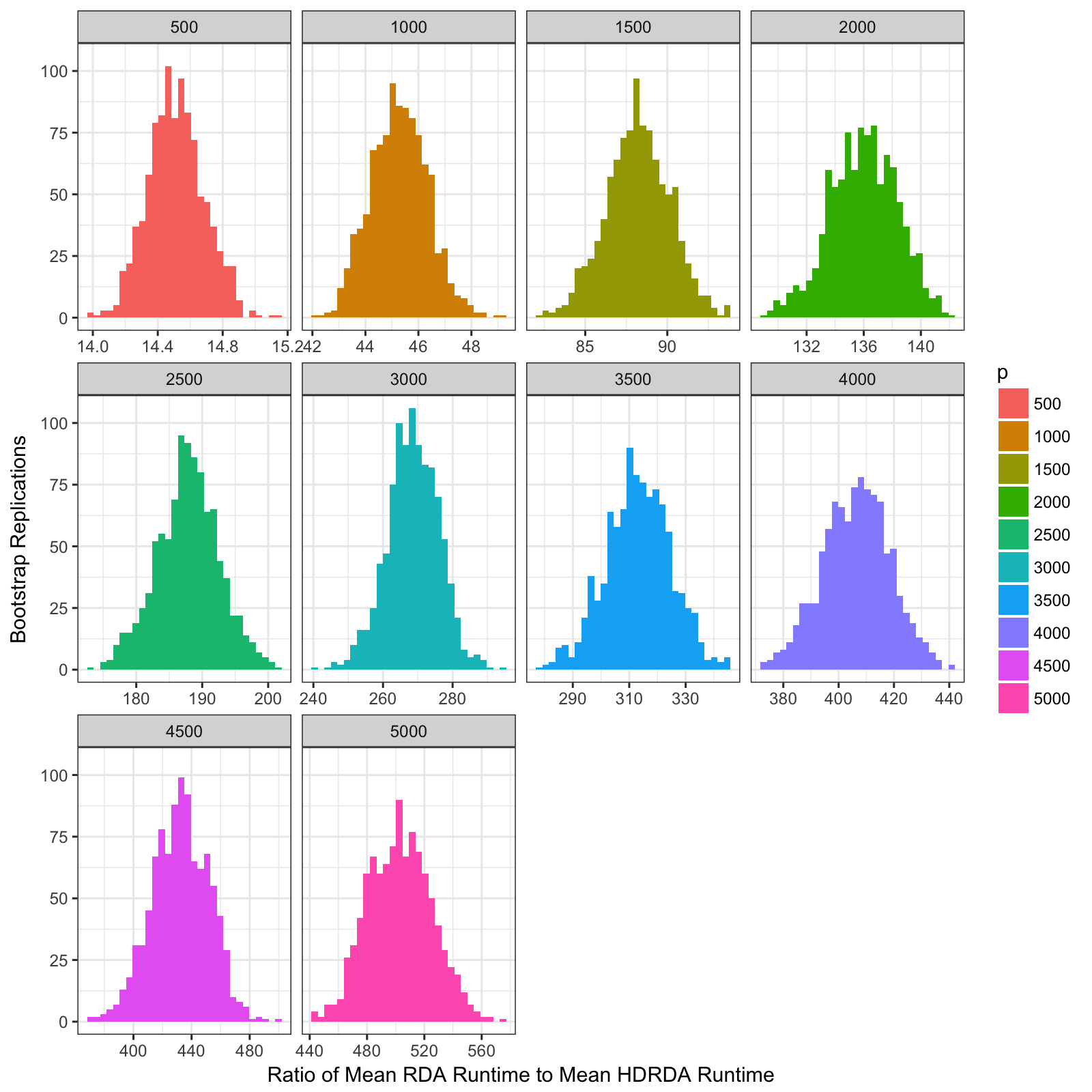} 

\end{knitrout}
\caption{Distribution of ratios of mean \emph{RDA} runtime to mean \emph{HDRDA}
  runtime across 1000 bootstrap replications.}
\label{fig:timing-comparison-bootstrap}
\end{figure}

\clearpage

\begin{figure}
\begin{knitrout}
\definecolor{shadecolor}{rgb}{0.969, 0.969, 0.969}\color{fgcolor}
\includegraphics[width=\linewidth]{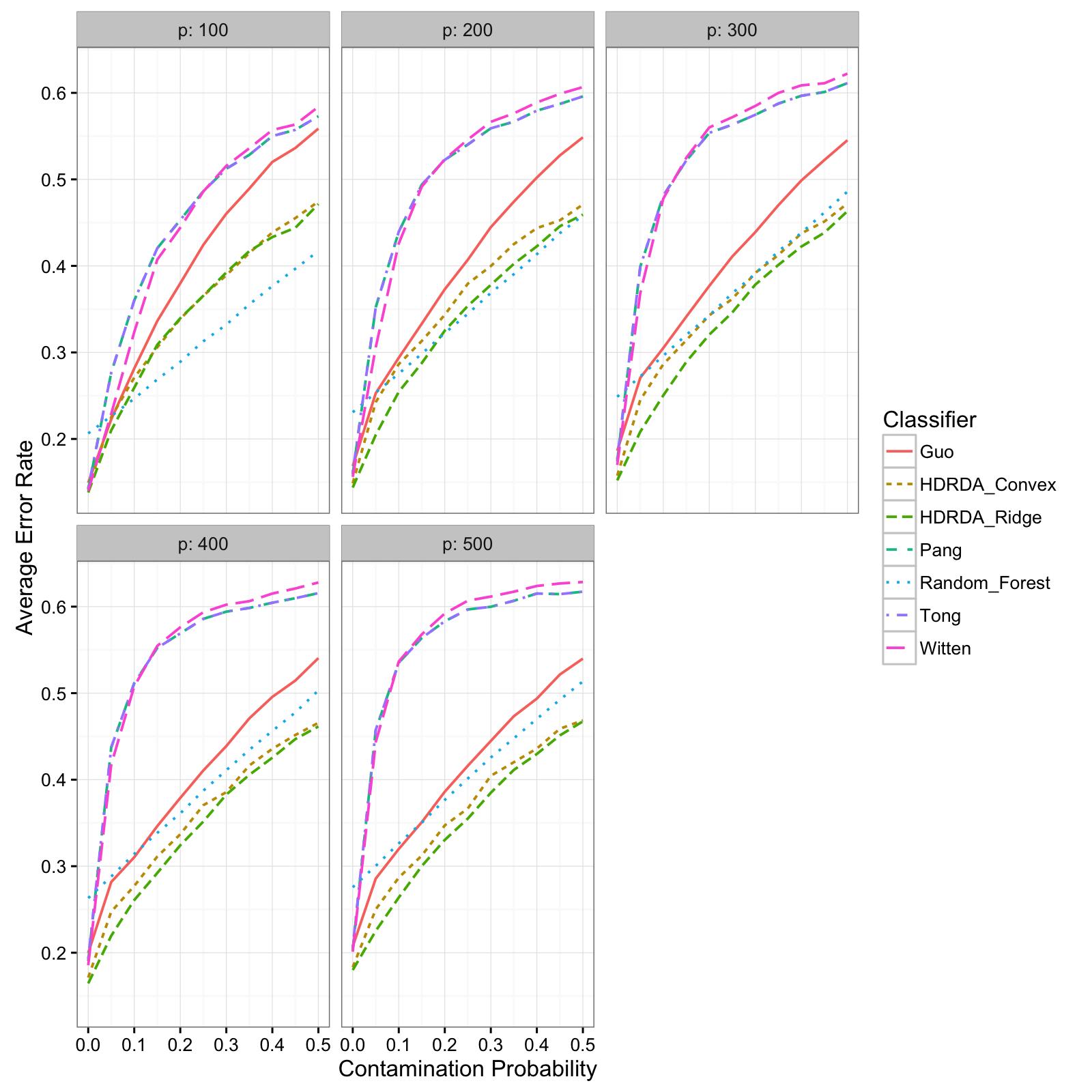} 

\end{knitrout}
\caption{Average classification error rates as a function of the contamination
  probability $\epsilon$. Approximate standard errors were no greater than
  0.022.}
\label{fig:sim-results}
\end{figure}

\clearpage

\begin{figure}
\begin{knitrout}
\definecolor{shadecolor}{rgb}{0.969, 0.969, 0.969}\color{fgcolor}
\includegraphics[width=\linewidth]{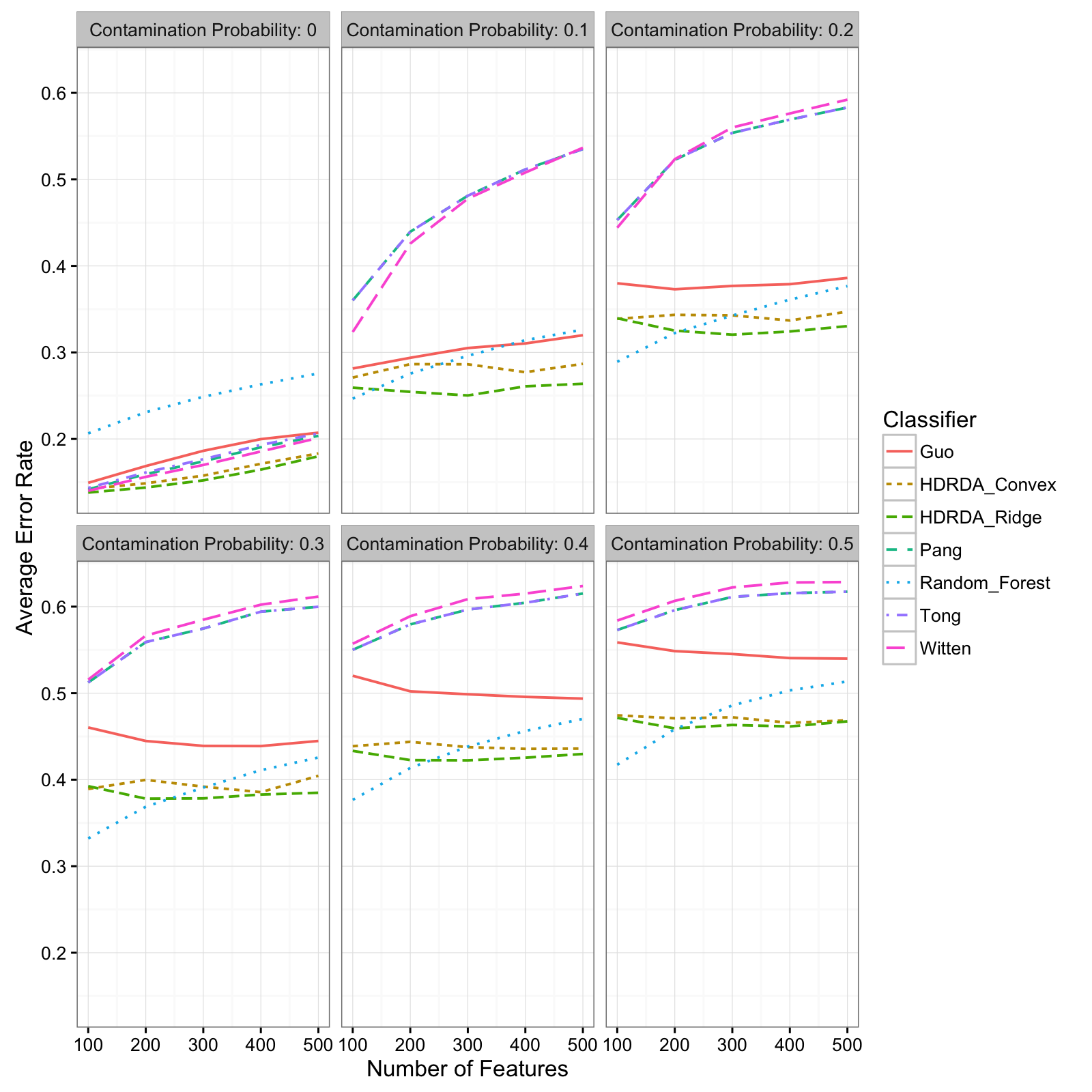} 

\end{knitrout}
\caption{Average classification error rates as a function of the number of
  features $p$. Approximate standard errors were no greater than
  0.022.}
\label{fig:sim-results-by-p}
\end{figure}

\clearpage

\begin{table}[ht]
\centering
\begin{tabular}{lllllll}
  \hline
Classifier & Chiaretti & Chowdary & Nakayama & Shipp & Singh & Tian \\ 
  \hline
Guo & \textbf{0.111} (0.044) & 0.056 (0.051) & 0.208 (0.061) & 0.086 (0.063) & \textbf{0.089} (0.055) & 0.268 (0.082) \\ 
  HDRDA Convex & 0.115 (0.044) & 0.035 (0.026) & \textbf{0.208} (0.066) & 0.073 (0.057) & 0.111 (0.059) & 0.229 (0.049) \\ 
  HDRDA Ridge & 0.118 (0.050) & \textbf{0.033} (0.022) & 0.208 (0.070) & \textbf{0.072} (0.065) & 0.099 (0.046) & 0.225 (0.050) \\ 
  Pang & 0.663 (0.062) & 0.197 (0.091) & 0.227 (0.062) & 0.192 (0.091) & 0.221 (0.095) & 0.267 (0.054) \\ 
  Random Forest & 0.124 (0.053) & 0.045 (0.028) & 0.232 (0.063) & 0.135 (0.078) & 0.093 (0.045) & \textbf{0.206} (0.044) \\ 
  Tong & 0.195 (0.068) & 0.197 (0.091) & 0.227 (0.062) & 0.192 (0.091) & 0.221 (0.095) & 0.267 (0.054) \\ 
  Witten & 0.194 (0.068) & 0.197 (0.091) & 0.232 (0.068) & 0.193 (0.092) & 0.221 (0.095) & 0.264 (0.053) \\ 
   \hline
\end{tabular}
\caption{The average of the test error rates obtained on gene-expression data sets over 100 random training-test partitions. Standard deviations of the test error rates are given in the parentheses. The classifier with the minimum average error rate for each data set is in bold.} 
\label{tab:microarray-results}
\end{table}

\end{document}